\definecolor{darkgreen}{HTML}{006400}
\theoremstyle{plain}
\newtheorem{theorem}{Theorem}[section]
\newtheorem{proposition}[theorem]{Proposition}
\theoremstyle{definition}
\theoremstyle{remark}
\newcommand{\method}{DYffusion} %
\newcommand{\E}{\mathbb{E}} %
\newcommand{\R}{\mathbb{R}} %
\newcommand{\dataspace}{\mathcal{X}}
\newcommand{\brackets}[1]{\left( #1 \right)}
\newcommand{\squarebrackets}[1]{\left[ #1 \right]}
\newcommand{\norm}[1]{|| #1 ||}
\newcommand{\diffusionstepsspace}{\mathcal{U}[\![0, N-1]\!]}  %
\newcommand{\diffusionstepsspacelong}{\texttt{Uniform}\left(\{0, \dots, N-1 \}\right)}  %
\newcommand{\interpolationstepsspace}{\mathcal{U}[\![1, h-1]\!]}  %
\newcommand{\interpolationstepsspacelong}{\texttt{Uniform}\left(\{1, \dots, h-1 \}\right)}  %
\newcommand{\inputtimesteps}{t} %
\newcommand{\earliesttimestep}{t}
\newcommand{\x}{\mathbf{x}}
\newcommand{\xk}[1][n]{\mathbf{s}^{(#1)}}   %
\newcommand{\xt}[1][t]{\mathbf{x}_{#1}}  %
\newcommand{\xhatt}[1][t]{\mathbf{\hat x}_{#1}}  %
\newcommand{\ddpmforwardprocess}{D}
\newcommand{\nnddpm}{R_\theta}
\newcommand{\nn}{F_\theta}
\newcommand{\nntime}[1][n]{i_{#1}} %
\newcommand{\nninterpolatenosubscripts}{\mathcal{I}}
\newcommand{\nninterpolate}{\nninterpolatenosubscripts_\phi}
\newcommand{\nninterpolatefunc}[3]{%
    \def\defaulta{\xt[\inputtimesteps]}%
    \def\defaultb{\xt[t+h]}%
    \def\defaultc{i}%
    \mathcal{I}_\phi \left(
        \ifx&#1&\defaulta\else#1\fi, \ifx&#2&\defaultb\else#2\fi, \ifx&#3&\defaultc\else#3\fi\right)%
}
\newcommand{\nninterpolatefuncstochastic}[3]{%
    \def\defaulta{\xt[\inputtimesteps]}%
    \def\defaultb{\xt[t+h]}%
    \def\defaultc{i}%
    \mathcal{I}_\phi \left(
        \ifx&#1&\defaulta\else#1\fi, \ifx&#2&\defaultb\else#2\fi, \ifx&#3&\defaultc\else#3\fi |\xi\right)%
}
\newcommand{\fcondition}{\mathbf{c}}
\newcommand{\interpolationschedule}{S}
\newcommand{\best}[1]{\textbf{\textcolor{black}{#1}}}
\newcommand{\secondbest}[1]{{\textcolor{blue}{#1}}}
\newcommand{\stdsize}{\scriptsize}
\theoremstyle{plain}
\title{
\method: A Dynamics-informed Diffusion Model \\for Spatiotemporal Forecasting
}
\author{%
    Salva Rühling Cachay%
    \enskip\enskip\enskip\enskip\enskip%
    Bo Zhao%
    \enskip\enskip\enskip\enskip\enskip%
    Hailey Joren%
    \enskip\enskip\enskip\enskip\enskip%
    Rose Yu%
   \\
    {University of California, San Diego}  \\
   $\{$\texttt{sruhlingcachay, bozhao, hjoren, roseyu}$\}$\texttt{@ucsd.edu}
}
\begin{document}
\maketitle

\begin{abstract}
While diffusion models can successfully generate data and make predictions, they are predominantly designed for static images.
We propose an approach for efficiently training diffusion models for probabilistic spatiotemporal forecasting, where generating stable and accurate rollout forecasts remains challenging,
Our method, DYffusion,  leverages the temporal dynamics in the data, directly coupling it with the diffusion steps in the model.
We train a stochastic, time-conditioned interpolator and a forecaster network that mimic the forward and reverse processes of standard diffusion models, respectively.
DYffusion naturally facilitates multi-step and long-range forecasting, allowing for highly flexible, continuous-time sampling trajectories and the ability to trade-off performance with accelerated sampling at inference time.
In addition, the dynamics-informed diffusion process in DYffusion imposes a strong inductive bias and significantly improves computational efficiency compared to traditional Gaussian noise-based diffusion models.
Our approach performs competitively on probabilistic forecasting of complex dynamics in sea surface temperatures, Navier-Stokes flows, and spring mesh systems.\footnote{Code is available at: \url{https://github.com/Rose-STL-Lab/dyffusion}}
\end{abstract}

\begin{figure}[h]
    \centering
        \includegraphics[width=1.0\linewidth]{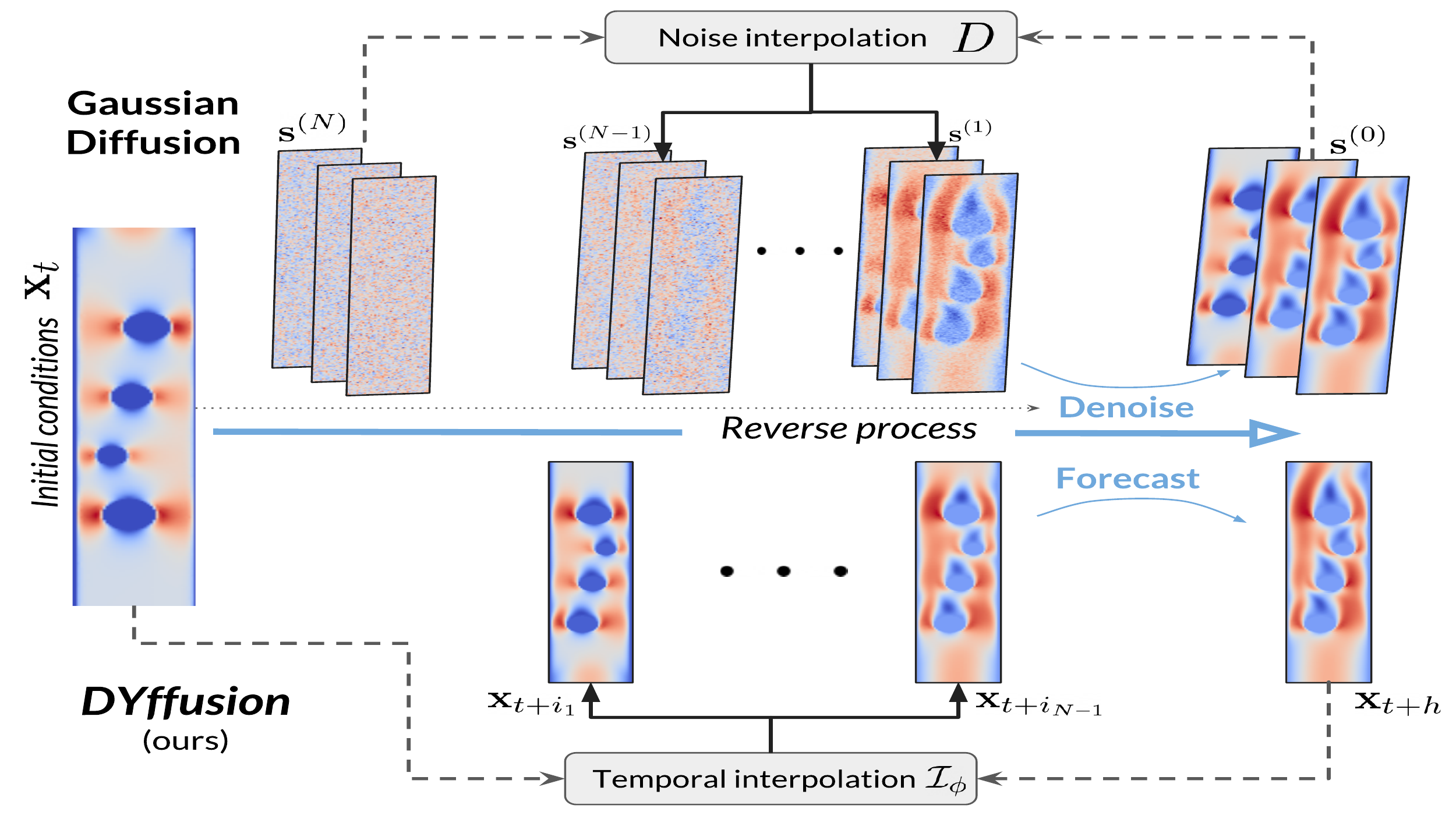}
    \caption{
    Our proposed framework, \method, reimagines the noise-denoise forward-backward processes of conventional diffusion models as an interplay of temporal interpolation and forecasting. On the top row, we illustrate the direct application of a video diffusion model to dynamics forecasting for a horizon of $h=3$.
    On the bottom row, \method\ generates continuous-time probabilistic forecasts for $\xt[t+1:t+h]$, given the initial conditions, $\xt$.
    During sampling, the reverse process iteratively steps forward in time by forecasting $\xt[t+h]$ (which plays the role of the ``clean data'', $\xk[0]$, in conventional diffusion models) and interpolating to one of $N$ intermediate timesteps, $\xt[t+i_n]$.
    As a result, our approach operates in the data space at all times and does not need to model high-dimensional videos at each diffusion state.
    }
    \label{fig:diagram}
\end{figure}

\section{Introduction}
\label{sec:intro}
Dynamics forecasting refers to the task of predicting the future behavior of a dynamic system, involving learning the underlying dynamics governing the system's evolution to make accurate predictions about its future states.
Obtaining accurate and reliable probabilistic forecasts is an important component of policy formulation~\cite{leggett2020united, bevacqua2023smiles}, risk management~\cite{300BillionServed2009, gneiting2005weather}, resource optimization~\cite{brown2015future}, and strategic planning~\cite{preisler2007statistical,westerling2003climate}. 
In many applications, accurate long-range probabilistic forecasts are particularly challenging to obtain \cite{300BillionServed2009, gneiting2005weather,bevacqua2023smiles}. 
In operational settings, methods typically hinge on sophisticated numerical simulations that require supercomputers to perform calculations within manageable time frames, often compromising the spatial resolution of the grid for efficiency~\cite{bauer2015thequiet}.

Generative modeling presents a promising avenue for probabilistic dynamics forecasting.
Diffusion models, in particular, can successfully model natural image and video distributions~\cite{ramesh2022dalle2, rombach2022stablediffusion, singer2022makeavideo}.
The standard approach, Gaussian diffusion, corrupts the data with Gaussian noise to varying degrees through the \emph{``forward process''} and sequentially denoises a random input at inference time to obtain highly realistic samples through the \emph{``reverse process''}~\cite{sohldickstein2015deepunsupervised, ho2020ddpm, karras2022edm}. 
However, learning to map from noise to realistic data is challenging in high dimensions, especially with limited data availability. Consequently, the computational costs associated with training and inference from diffusion models are prohibitive, requiring a sequential sampling process over hundreds of diffusion steps. For example, a denoising diffusion probabilistic model (DDPM) takes around 20
hours to sample 50k images of size 32 × 32 \cite{song2021ddim}. 
In addition, few methods apply diffusion models beyond static images. Video diffusion models \cite{ho2022videodiffusion, harvey2022flexiblevideos, yang2022diffusion, voleti2022mcvd} can generate realistic samples but do not explicitly leverage the temporal nature of the data to generate accurate forecasts.

In this work, we introduce a novel framework to train a dynamics-informed diffusion model for multi-step 
 probabilistic forecasting.
Inspired by recent findings that highlight the potential of non-Gaussian diffusion processes~\cite{bansal2022cold, hoogeboom2023blurring, daras2022softdiffusion}, we introduce a new forward process. This process relies on temporal interpolation and is implemented through a time-conditioned neural network.
Without requiring assumptions about the physical system, our approach imposes an inductive bias by coupling the diffusion process steps with the time steps in the dynamical system. This reduces the computational complexity of our diffusion model in terms of memory footprint, data efficiency, and the number of diffusion steps required during training.
Our resulting diffusion model-based framework, which we call \method, naturally captures long-range dependencies and generates accurate probabilistic ensemble forecasts for high-dimensional spatiotemporal data.

Our contributions can be summarized as follows:
\vspace{-1mm}
\begin{itemize}
    \item We investigate probabilistic spatiotemporal forecasting from the perspective of diffusion models, including their applications to complex physical systems with a large number of dimensions and low data availability.
    \item We introduce \method, a flexible framework for multi-step forecasting and long-range horizons that leverages a temporal inductive bias to accelerate training and lower memory needs. We explore the theoretical implications of our method, proving that DYffusion is an implicit model that learns the solutions to a dynamical system, and cold sampling \cite{bansal2022cold} can be interpreted as its Euler's method solution.
    \item We conduct an empirical study comparing performance and computational requirements in dynamics forecasting, including state-of-the-art probabilistic methods such as conditional video diffusion models. We find that the proposed approach achieves strong probabilistic forecasts and improves computational efficiency over standard Gaussian diffusion.
\end{itemize}

\section{Background}
\label{sec:problemstatement}
\paragraph{Problem setup.}
We study the problem of probabilistic spatiotemporal forecasting. We start with a dataset of $\{ \xt[t]\}_{t=1}^T$ snapshots with $\xt[t]\in\dataspace$. Here, $\dataspace$ represents the space in which the data lies, which 
may consist of spatial dimensions (e.g., latitude, longitude, atmospheric height) and a channel dimension (e.g., velocities, temperature, humidity).
The task of probabilistic forecasting is to learn a conditional distribution
$P(\xt[t+1:t+h] \,|\, \xt[t-l+1:t])$
that uses $l$ snapshots from the past to forecast a subsequent \emph{horizon} of $h$ snapshots.
Here, we focus on the task of forecasting from a single initial condition, where we aim to learn $P(\xt[t+1:t+h] \,|\, \xt[\inputtimesteps])$. This covers many realistic settings ~\cite{Rasp2020weatherbench, otness21nnbenchmark} while minimizing the computational requirements for accurate forecasts \cite{tran2023factorized}. %

\paragraph{Diffusion processes.}
In a standard diffusion model, the \emph{forward diffusion process} iteratively degrades the data with increasing levels of (Gaussian) noise. The reverse diffusion process then gradually removes the noise to generate data. 
We denote these diffusion step states, $\xk$, with a superscript $n$ to clearly distinguish them from the time steps of the data $\{ \xt[t]\}_{t=1}^T$.
This process can be generalized to consider a 
degradation operator $\ddpmforwardprocess$  that takes as input the data point $\xk[0]$ and outputs $\xk = \ddpmforwardprocess(\xk[0], n)$ for varying degrees of degradation proportional to $n \in \{1,\dots, N\}$ \cite{bansal2022cold}.
Oftentimes, $\ddpmforwardprocess$ adds Gaussian noise with increasing levels of variance so that $\xk[N] \sim \mathcal{N}(\mathbf{0}, \mathbf{I})$.
A denoising network parameterized by $\theta$, $\nnddpm$ is trained to \emph{restore} $\xk[0]$, i.e. such that $\nnddpm(\xk, n) \approx \xk[0]$. The diffusion model can be conditioned on the input dynamics by considering $\nnddpm(\xk, \xt[t], n)$. In the case of dynamics forecasting, the diffusion model can be trained to minimize the objective
\newcommand{\uniform}[2]{\mathcal{U}[\![#1, #2]\!]}
\begin{equation}
    \min_\theta 
    \E_{n \sim \uniform{1}{N}, \xt[t], \xk[0]\sim \dataspace}
    \squarebrackets{
    \norm{\nnddpm(\ddpmforwardprocess(\xk[0], n), \xt[t], n) - \xk[0]}^2},
\label{eq:diffusionmodels}
\end{equation}
where $\dataspace$ is the data distribution, $\norm{\cdot}$ is a norm, and $\xk[0] = \xt[t+1:t+h]$  is the prediction target.  $\uniform{a}{b}$ describes the uniform distribution over the set $\{a, a+1, \cdots, b\}$.
In practice, $\nnddpm$ may be trained to predict the Gaussian noise that has been added to the data point using score matching objective~\cite{ho2020ddpm}. The objective in Eq.~\eqref{eq:diffusionmodels} can be viewed as a generalized version of the standard diffusion models, see Appendix~\ref{sec:dyffusion-vs-normal-diffusion} for a more comprehensive analysis.

\section{\method: DYnamics-Informed Diffusion Model}

\begin{wrapfigure}{r}{0.68\textwidth}
\begin{minipage}{0.68\textwidth}
\vspace{-8mm}
\begin{algorithm}[H] %
\caption{\method, Two-stage Training}
\label{alg:dyffusion}
\begin{algorithmic}
    \STATE
        {\bfseries Input:} 
        networks $\nn, \nninterpolate$, norm $\norm{\cdot}$, horizon $h$, schedule $[i_n]_{i=0}^{N-1}$ %
    \STATE
        \emph{Stage 1:} Train interpolator network, $\nninterpolate$ %
    \STATE
        \quad 1.
            Sample $i \sim \interpolationstepsspacelong$
        \\\quad 2.
            Sample $\xt[\inputtimesteps], \xt[t+i], \xt[t+h] \sim \dataspace$
        \\\quad 3.
            Optimize $\min_\phi \norm{\nninterpolatefunc{}{}{} - \xt[t+i]}^2$
    \STATE
    \STATE
        \emph{Stage 2:} Train forecaster network (diffusion model backbone), $\nn$ %
    \STATE
        \quad 1.
            Freeze $\nninterpolate$ and enable inference stochasticity (e.g. dropout)
        \\\quad 2.
        Sample $n \sim \diffusionstepsspacelong$ and $\xt[\inputtimesteps], \xt[t+h]\sim \dataspace$
        \\\quad 3.
            Optimize $\min_\theta \norm{\nn(\nninterpolatefunc{}{}{i_n}, \nntime[n]) - \xt[t+h]}^2$
\end{algorithmic}
\end{algorithm}
\vspace{-6mm}
\end{minipage}
\end{wrapfigure}

\label{sec:methods}
The key innovation of our method, \method, is a reimagining of the diffusion processes to more naturally model spatiotemporal sequences, $\xt[\earliesttimestep:t+h]$ (see Fig.~\ref{fig:diagram}). 
Specifically, we replace the degradation operator, $\ddpmforwardprocess$, with a stochastic interpolator network $\nninterpolate$, and the restoration network, $\nnddpm$, with a deterministic forecaster network, $\nn$.

At a high level, our forward and reverse processes emulate the temporal dynamics in the data. Thus, intermediate steps in the diffusion process can be reused as forecasts for actual timesteps in multi-step forecasting. Another benefit of our approach is that the reverse process is initialized with the initial conditions of the dynamics and operates in observation space at all times. 
In contrast, a standard diffusion model is designed for unconditional generation, and reversing from white noise requires more diffusion steps. 
For conditional prediction tasks such as forecasting, \method\ emerges as a much more natural method that is well aligned with the task at hand.
See Table~\ref{tab:glossary} for a full glossary.

\paragraph{Temporal interpolation as a forward process.}
To impose a temporal bias, we train a time-conditioned network $\nninterpolate$ to interpolate between snapshots of data. Specifically, given a horizon $h$, we train $\nninterpolate$ so that $\nninterpolate\brackets{\xt[\inputtimesteps], \xt[t+h], i} \approx \xt[t+i]$ for $i \in \{1, \dots, h-1\}$ using the objective:
\begin{equation}
    \min_\phi 
        \E_{i \sim \interpolationstepsspace,  \xt[\inputtimesteps, t+i, t+h] \sim \dataspace}
        \squarebrackets{
                \norm{\nninterpolatefunc{}{}{} - \xt[t+i]}^2
                }.
\label{eq:interpolation}
\end{equation}
Interpolation is an easier task than forecasting, and we can use the resulting interpolator $\nninterpolate$ during inference to interpolate beyond the temporal resolution of the data. That is, the time input can be continuous, with $i \in (0, h-1)$, where we note that the range $(0, 1)$ is outside the training regime, as discussed in Appendix~\ref{sec:ads}. 
To generate probabilistic forecasts, it is crucial for the interpolator, $\nninterpolate$, to \emph{produce stochastic outputs} within the diffusion model and during inference time. 
We enable this using Monte Carlo dropout~\cite{gal2016dropout} at inference time.

\paragraph{Forecasting as a reverse process.}
In the second stage, we train a forecaster network $\nn$ to forecast $\xt[t+h]$ such that $\nn(\nninterpolate\brackets{\xt[\inputtimesteps], \xt[t+h], i |\xi}, i)\approx \xt[t+h]$ for $i\in \interpolationschedule =[i_n]_{n=0}^{N-1}$, where $\interpolationschedule$ denotes a schedule mapping the diffusion step to the interpolation timestep. The interpolator network, $\nninterpolate$, is frozen with inference stochasticity enabled, represented by the random variable $\xi$. Here, $\xi$ stands for the randomly dropped out weights of the neural network and is omitted henceforth for clarity.
Specifically, we seek to optimize the objective
\begin{equation}
    \min_\theta 
        \E_{n \sim \diffusionstepsspace, \xt[\inputtimesteps,t+h]\sim \dataspace}
        \squarebrackets{
            \norm{\nn(\nninterpolatefuncstochastic{}{}{i_n}, \nntime[n]) - \xt[t+h]}^2}.
\label{eq:forecaster}
\end{equation}
To include the setting where $\nn$ learns to forecast the initial conditions, we define $i_0 := 0$ and $\nninterpolatefunc{}{\cdot}{i_0} := \xt$.
In the simplest case, the forecaster network is supervised by all possible timesteps given by the temporal resolution of the training data. That is, $N=h$ and $\interpolationschedule = [j]_{j=0}^{h-1}$. Generally, the interpolation timesteps should satisfy $0 = i_0 < i_n < i_m < h$ for $0 < n < m \leq N-1$.
Given the equivalent roles between our forecaster and the denoising net in diffusion models, we also refer to them as the diffusion backbones.
As the time condition to our diffusion backbone is $i_n$ instead of $n$, we can choose \textit{any} diffusion-dynamics schedule during training or inference and even use $\nn$ for unseen timesteps (see  Fig.~\ref{fig:schedules}).  
The resulting two-stage training algorithm is shown in Alg.~\ref{alg:dyffusion}. 

Because the interpolator $\nninterpolate$ is frozen in the second stage, the imperfect forecasts  $\xhatt[t+h] = \nn(\nninterpolatefunc{}{}{i_n}, \nntime[n])$ may degrade accuracy when used during sequential sampling. To handle this, we introduce a one-step look-ahead loss term $\norm{\nn(\nninterpolatefunc{}{\xhatt[t+h]}{i_{n+1}}, \nntime[n+1]) - \xt[t+h]}^2$ whenever $n+1< N$ and weight the two loss terms equally. Additionally, providing a clean or noised form of the initial conditions $\xt$ as an additional input to the forecaster net can improve performance. These additional tricks are discussed in Appendix \ref{sec:methods-appendix}.

\paragraph{Sampling.}
Taken together, we can write the generative process of \method\ as follows:
\begin{align}
    p_\theta(\xk[n+1] | \xk[n], \xt[t]) = 
    \begin{cases}
        \nn(\xk[n], i_{n})  & \text{if} \ n = N-1 \\
        \nninterpolate(\xt[t], \nn(\xk[n], i_n), i_{n+1}) & \text{otherwise,}
    \end{cases} 
    \label{eq:new-reverse}
\end{align}
where $\xk[0]=\xt[t]$ and $\xk[n]\approx\xt[t+i_n]$ correspond to the initial conditions and predictions of intermediate steps, respectively.
In our formulations, we reverse the diffusion step indexing to align with the temporal indexing of the data. That is, $n=0$ refers to the start of the reverse process (i.e. $\xt[\inputtimesteps]$), while $n=N$ refers to the final output of the reverse process (here, $\xt[t+h]$).
Our reverse process steps forward in time, in contrast to the mapping from noise to data in standard diffusion models. As a result,  \method\ should require fewer diffusion steps and data.

\begin{wrapfigure}{r}{0.66\textwidth} %
\begin{minipage}{0.66\textwidth}
\vspace{-8mm}
\begin{algorithm}[H]
   \caption{Adapted Cold Sampling~\cite{bansal2022cold} for \method}
   \label{alg:sa2}
    \begin{algorithmic}[1]
    \STATE
        {\bfseries Input:} 
        Initial conditions $\xhatt[\inputtimesteps]:=\xt[\inputtimesteps]$, schedule $[i_n]_{i=0}^{N-1}$, output timesteps $J$ (by default $J= \{1, \dots, h-1\}$)
    \FOR{\textit{n} $=0,1,\dots,N-1$}
    \STATE $\xhatt[t+h] \gets \nn(\xhatt[t+i_n], \nntime[n])$ 
    \STATE $\xhatt[t + i_{n+1}] = \nninterpolatefunc{}{\xhatt[t+h]}{i_{n+1}} - \nninterpolatefunc{}{\xhatt[t+h]}{i_n} + \xhatt[t+i_n]$
    \ENDFOR
    \STATE $\xhatt[t+j] \leftarrow \nninterpolatefunc{}{\xhatt[t+h]}{j}, \; \forall j \in J %
    \quad$ \# Optional refinement
     \STATE {\bfseries Return:} $\{\xhatt[t+j] \;|\; j \in J \} \cup \{\xhatt[t+h]\}$
    \end{algorithmic}
\end{algorithm}
\vspace{-6mm}
\end{minipage}
\end{wrapfigure}

Similarly to the forward process in ~\citep{song2021ddim, bansal2022cold}, our interpolation stage ceases to be a diffusion process. Our forecasting stage as detailed in Eq.~\eqref{eq:forecaster}, follows the (generalized) diffusion model objectives.
This similarity allows us to use many existing diffusion model sampling methods for inference, such as the cold sampling algorithm from~\cite{bansal2022cold} (see  Alg.~\ref{alg:sa2}). In Appendix~\ref{appendix:ode-cold-is-better}, we also discuss a simpler but less performant sampling algorithm.
During the sampling process, our method essentially alternates between forecasting and interpolation, as illustrated in Fig.~\ref{fig:dyffusion-interplay}. 
$\nn$ always predicts the last timestep, $\xt[t+h]$, but iteratively improves those forecasts as the reverse process comes closer in time to $t+h$. This is analogous to the iterative denoising of the ``clean'' data in standard diffusion models.
This motivates line $6$ of Alg.~\ref{alg:sa2}, where the final forecast of $\xt[t+h]$ can be used to fine-tune intermediate predictions or to increase the temporal resolution of the forecast. \method\ can be applied autoregressively to forecast even longer rollouts beyond the training horizon, as demonstrated by the Navier-Stokes and spring mesh experiments in section \ref{sec:experiments}.

\begin{figure}[h]
  \centering
    \begin{tikzpicture}
\node at (2.4,7) {$\xt[0]$};
\foreach [count=\i from 0] \j/\ti in {2/1, 4/2, 6/3} {
  \draw[->] (\j, 7) -- (\j+2, 9);
  \draw[->, blue] (\j+2, 9) -- (\j+2, 7);
  \draw[->] (\j+2, 7) -- (\j+4, 9);
  
  \node[rotate=45] at (\j+0.85, 8.2) {\small{$n=\i$}};
  \node at (\j+2, 8.2) {\color{blue}{$i_n=\ti$}};

  \node at (\j+2.4, 7) {$\hat{\mathbf{x}}_{\ti}$};
  \node at (\j+2, 9.4) {$\xhatt[h]^{(\i)}$};
}

 \node[rotate=45] at (6+2.85, 8.2) {$n=3$};
\node at (6+4, 9.4) {$\xhatt[4]:=\xhatt[h]^{(3)}$};
\end{tikzpicture}
\caption{
During sampling, \method\ alternates between forecasting and interpolation, following Alg.~\ref{alg:sa2}. In this example, the sampling trajectory follows a simple schedule of going through all integer timesteps that precede the horizon of $h=4$, with the number of diffusion steps $N=h$. The output of the last diffusion step is used as the final forecast for $\xhatt[4]$. %
The \textbf{{\color{black}black}} lines represent forecasts by the forecaster network, $\nn$. The first forecast is based on the initial conditions, $\xt[0]$. The \textbf{{\color{blue}blue}} lines represent the subsequent temporal interpolations performed by the interpolator network, $\nninterpolate$.
}
\label{fig:dyffusion-interplay}
\end{figure}
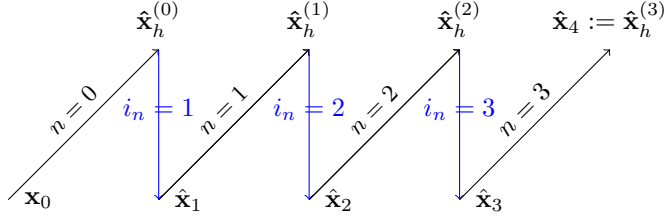

\paragraph{Memory footprint.}
\method\ requires only $\xt[\inputtimesteps]$ and $\xt[t+h]$ (plus $\xt[t+i]$ during the first stage) to train, resulting in a constant memory footprint as a function of $h$. In contrast, direct multi-step prediction models including video diffusion models or (autoregressive) multi-step loss approaches require $\xt[\earliesttimestep:t+h]$ to compute the loss. This means that these models must fit $h+1$ timesteps of data into memory, which scales poorly with the forecasting horizon $h$. Therefore, many are limited to predicting a small number of frames. MCVD, for example, trains on a maximum of 5 video frames due to GPU memory constraints~\cite{voleti2022mcvd}.

\paragraph{Reverse process as ODE.}
\method\ can be interpreted as modeling the dynamics by a ``diffusion process'', similar to an implicit model \citep{el2021implicit}. This view helps explain \method's superior forecasting ability, as it learns to model the physical dynamical system  $\x(t)$.
The neural networks $\nninterpolate$ and $\nn$ are trained to construct a differential equation that implicitly models the solution to $\x(t)$.

Let $s$ be the time variable. \method\ models the dynamics as 
\begin{align}
\label{eq:ode-main}
    \frac{d\x(s)}{ds} 
    = \frac{d\nninterpolatefunc{\xt[\inputtimesteps]}{\nn(\x, s)}{s}}{ds}.
\end{align}
The initial condition is given by $\x(t) = \x_t$. During prediction, we seek to obtain 
\begin{align}
\label{eq:ode-main-integral}
    \x(s) = \x(t) + \int_t^s \frac{d\nninterpolatefunc{\xt[\inputtimesteps]}{\nn(\x, s)}{s}}{ds}ds \quad\quad \text{for $s \in (t, t+h]$}.
\end{align}
The prediction process is equivalent to evaluating \eqref{eq:ode-main-integral} at discrete points in the forecasting window $(t, t+h]$. Sampling schedules $i_n$ can be thought of as step sizes in ODE solvers. Therefore, different sampling schedules can be flexibly chosen at inference time, since they are discretizations of the same ODE. 
Different sampling algorithms can be obtained from different ODE solvers and we prove that cold sampling is an approximation of the Euler method (derivation in Appendix \ref{appendix:ode-cold-from-euler}).

Viewing \method\ as an implicit model emphasizes its dynamics-informed nature and reveals the relation between \method\ and DDIM \citep{song2021ddim}. Specifically, DDIM can be viewed as an implicit model that models the solution to the noise-removing dynamics from a random image to a clean image. In comparison, \method\ is an implicit model that learns the solutions of a  dynamical system using data from the current step $\xt$ to the future step $\xt[t+h]$.

\section{Related Work}
\label{sec:relatedwork}

\paragraph{Diffusion Models.}
Diffusion models \cite{ho2020ddpm, sohldickstein2015deepunsupervised, song2019generative, song2020scoreSDE} have demonstrated significant success in diverse domains, notably images and text \cite{ling2022diffusionsurvey}.
While interesting lines of work explore Gaussian diffusion for multivariate time-series imputation~\cite{lopez2023diffusionimputation} and forecasting~\cite{Rasul2021AutoregressiveDD}, high-dimensional spatiotemporal forecasting has remained unexplored.
Methods such as combining blurring and noising techniques \cite{hoogeboom2023blurring, daras2022softdiffusion, bansal2022cold} and adopting a unified perspective with Poisson Flow Generative Models \cite{xu2023pfgmplusplus} have shown promise as alternative approaches to extend diffusion processes beyond Gaussian noise.
Although intellectually intriguing, the results of cold diffusion \cite{bansal2022cold} and other recent attempts that use different corruptions \cite{rissanen2023generative} are usually inferior to Gaussian diffusion.
Our work departs from these methods in that we propose a dynamics-informed process for the conditional generation of high-dimensional forecasts that is not based on multiple levels of data corruption.

\paragraph{Diffusion Models for Video Prediction.}

Most closely related to our work are diffusion models for videos~\cite{voleti2022mcvd, ho2022videodiffusion, yang2022diffusion, singer2022makeavideo, ho2022imagenvideo, harvey2022flexiblevideos}. The task of conditional video prediction can be placed under the multi-step notation of Eq.~\eqref{eq:diffusionmodels}.
Notably, we are interested in modeling the full dynamics and the underlying physics rather than object-centric tasks.  There is also evidence that techniques tailored to computer vision and NLP do not transfer well to spatiotemporal problems~\cite{tu2022ccaiautoml}.
These video diffusion models also rely internally on the standard Gaussian noise diffusion process 
inherited from their image-generation counterparts. As such, these models miss the opportunity to incorporate the temporal dynamics of videos into the diffusion process.

\paragraph{Dynamics Forecasting.}
Deterministic next-step forecasting models can be unrolled autoregressively to achieve multi-step predictions. However, this may result in poor or unstable rollouts due to compounding prediction errors \cite{de2018physicalsstbaseline, scher2019generalization, chattopadhyay2020deep, chattopadhyay2022why, keisler2022forecasting, brandstetter2022message, kashinath2021piml, nguyen2023climax}.
The issue can be alleviated by unrolling the model at training time to introduce a loss term on the multi-step predictions \cite{pathak2022fourcastnet, keisler2022forecasting, lam2022graphcast, han2022predicting, brandstetter2022message}.
However, this approach is memory-intensive and comes at the expense of efficiency since the model needs to be called sequentially for each multi-step loss term. It can also introduce new instabilities because the gradients have to be computed recursively through the unrolled steps. Even then, this approach may fall behind physics-based baselines for long-range horizons \cite{pathak2022fourcastnet}.
Alternatively, 
forecasting multiple steps all at once as in video diffusion models \cite{weyn2019canmachines, ravuri2021skilful, brandstetter2022message} or conditioned on the time~\cite{lambert21longtermRL, rasp2021datadriven, espeholt2022metnet2, nguyen2023climax}, %
training separate models for each required horizon \cite{scher2018toward, ham2019dlenso, vos2021long}, 
noising the training data~\cite{sanchezgonzalez2020learning}, and
encoding physics knowledge into the ML model \cite{de2018physicalsstbaseline, erichson2019pimlLyapunov, wang2020tfnet, mamakoukas2020learningstable, chattopadhyay2020deep, kochkov2021mlfluids, chattopadhyay2022why},
can often produce better long-range forecasts, though they still require a fixed temporal resolution. 
For probabilistic dynamics forecasting~\cite{wu2021quantifying}, most methods focus on autoregressive next-step prediction models~\cite{scher2021ensemble, garg2022weatherbenchprob, hu2023swinrnn}.  
Some notable works target precipitation nowcasting~\cite{ravuri2021skilful}, or short-term forecasting~\cite{espeholt2022metnet2}. Alternatively to using intrinsically probabilistic models, one can use a deterministic model to generate an ensemble forecast by perturbing its initial conditions~\cite{pathak2022fourcastnet, graubner2022calibration}.
Our framework addresses primary challenges in the field~\cite{thuemmel2023inductive}, specifically realistic long-range forecasts and efficient modeling of multi-step dynamics.

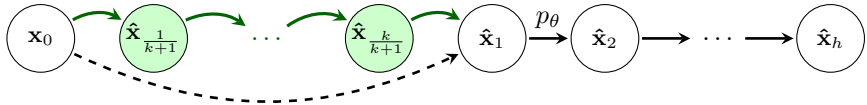
\begin{figure}[h]
    \centering
    \begin{tikzpicture}[node distance=1.5cm]
  \tikzset{
    nodecommon/.style={
      circle,
      draw,
      color=black,
      fill=white,
      minimum size=0.9cm,
      inner sep=0pt,
      font=\small
    },
    nodeother1/.style={
      nodecommon,
      fill=green!20,
    },
    nodeother2/.style={
      nodecommon,
      fill=orange!20,
    },
    edgecommon/.style={
      black,
      ->,
      >=stealth,
      shorten >=1pt,
      shorten <=1pt,
      line width=1pt
    },
    schedule1/.style={
      ->,
      color=darkgreen,
      bend left=30,
      >=stealth,
      shorten >=1pt,
      shorten <=1pt,
      line width=1.2pt
    },
    schedule2/.style={
      ->,
      color=orange!20,
      bend right=20,
      >=stealth,
      shorten >=1pt,
      shorten <=1pt,
      line width=1pt
    }
  }
  
  \node[nodecommon] (x0) at (0,0) {$\xt[0]$};
  \node[nodeother1,right of=x0] (xk1) {$\xhatt[\frac{1}{k+1}]$};
  \node[darkgreen,right of=xk1] (dotsgreen) {$\ldots$};
  \node[nodeother1,right of=dotsgreen] (xk2)  {$\xhatt[\frac{k}{k+1}]$}; %
  \node[nodecommon,right of=xk2] (x1) {$\xhatt[1]$};
  \node[nodecommon,right of=x1] (x2) {$\xhatt[2]$};
  \node[black,right of=x2] (dots1) {$\ldots$};
  \node[nodecommon,right of=dots1] (xh) {$\xhatt[h]$};
  
  \draw[edgecommon] (x1) -- (x2) node[midway,above,sloped] {$p_\theta$};
  \draw[edgecommon] (x2) -- (dots1);
  \draw[edgecommon] (dots1) -- (xh);

  \draw[edgecommon, dashed, bend right=25] (x0) to (x1);

  \draw[schedule1] (x0) to (xk1);
  \draw[schedule1] (xk1) to (dotsgreen);
  \draw[schedule1] (dotsgreen) to (xk2);
  \draw[schedule1] (xk2) to (x1);
  
\end{tikzpicture}
\caption{
Example schedules for coupling diffusion steps to dynamical time steps. While a naive approach only uses timesteps given by the temporal resolution of the data (i.e. discrete indices), our framework can accommodate continuous indices. 
The additional $k$ diffusion steps are highlighted in \textbf{{\color{darkgreen}green}} and map uniformly between the input timestep, $\xt[0]$, and earliest output timestep, $\xhatt[1]$.
Our experiments using the SST dataset in section \ref{sec:experiments} demonstrate that increasing the number of diffusion steps with implicit intermediate timesteps 
can improve performance (see Appendix~\ref{sec:samplingschedulesappendix}).
}
\label{fig:schedules}
\end{figure}
\section{Experiments}
\label{sec:experiments}
\subsection{Datasets}
We evaluate our method and baselines on three different datasets:
\begin{itemize}[leftmargin=*]
    \item 
   \textbf{Sea Surface Temperatures (SST):} a new dataset based on NOAA OISSTv2~\cite{huang2021oisstv2}, which comes at a daily time-scale.
    Similar to ~\cite{de2018physicalsstbaseline, wang2022metalearning}, we train our models on regional patches which increases the available data (here, we choose 11 boxes of $60$ latitude $\times 60$ longitude resolution in the eastern tropical Pacific Ocean). Unlike the data based on the NEMO dataset in \cite{de2018physicalsstbaseline, wang2022metalearning}, we choose OISSTv2 as our SST dataset because it contains more data (although it has a lower spatial resolution of $1/4^\circ$ compared to $1/12^\circ$ of NEMO). We train, validate, and test all models for the years 1982-2019, 2020, and 2021, respectively. The preprocessing details are in the Appendix~\ref{sec:sstpreprocessing}.
    \item 
  \textbf{Navier-Stokes flow:} benchmark dataset from \cite{otness21nnbenchmark}, which consists of a $221\times42$ grid. Each trajectory contains four randomly generated circular obstacles that block the flow. The viscosity is $1e\text{-}3$. The channels consist of the $x$ and $y$ velocities as well as a pressure field. 
    Boundary conditions and obstacle masks are given as additional inputs to all models.
    \item 
  \textbf{Spring Mesh:} benchmark dataset from~\cite{otness21nnbenchmark}. It represents a $10\times10$ grid of particles connected by springs, each with mass $1$. The channels consist of two position and momentum fields each.
\end{itemize}

\subsection{Experimental Setup}
We focus on \emph{probabilistic multi-step forecasting}, and especially long rollouts, as opposed to attaining the best next-step or short-term forecasts. This is an important distinction because, while deterministic single-step forecasting models such as those of~\cite{otness21nnbenchmark} are appropriate for short-term predictive skill comparison, such models tend to lack stability for long-term forecasts. We demonstrate this in the Appendix~\ref{sec:benchmark-deterministic} (see Fig.~\ref{fig:time-vs-mse}), where our method and baselines excel in the long-range forecasting regime while being competitive in the short-range regime compared to the baselines from~\cite{otness21nnbenchmark}.

\paragraph{Neural network architectures.}
For a given dataset, we use the \emph{same backbone architecture} for all baselines as well as for both the interpolation and forecaster networks in \method.
For the SST dataset, we use a popular UNet architecture designed for diffusion models\footnote{\footnotesize{\url{https://github.com/lucidrains/denoising-diffusion-pytorch/blob/main/denoising_diffusion_pytorch/denoising_diffusion_pytorch.py}}}. For the Navier-Stokes and spring mesh datasets, we use the UNet and CNN from the original paper~\cite{otness21nnbenchmark}, respectively. 
The UNet and CNN models from~\cite{otness21nnbenchmark} are extended by the sine/cosine-based featurization module of the SST UNet to embed the diffusion step or dynamical timestep.
\begin{table}[t]
  \caption{Results for sea surface temperature forecasting of 1 to 7 days ahead, as well Navier-Stokes flow full trajectory forecasting of 64 timesteps. Numbers are averaged out over the evaluation horizon. 
    \best{Bold} indicates best, \secondbest{blue} second best. For CRPS and MSE, lower is better. For SSR, closer to 1 is better.
    For SST, all models are trained on forecasting $h=7$ timesteps. The time column represents the time needed to forecast all 7 timesteps for a single batch.
    For Navier-Stokes, Perturbation, Dropout, and \method\ are trained on a horizon of $h=16$.
    MCVD and DDPM are trained on $h=4$ and $h=1$, respectively, as we were not able to successfully train them using larger horizons. }
    \label{tab:sst-navier-stokes-results}
    \centering
    \resizebox{\textwidth}{!}{
    \begin{tabular}{lccccccc}
    \toprule
    \multirow{1}{*}{\textbf{Method}} & \multicolumn{4}{c}{\textbf{SST}} & \multicolumn{3}{c}{\textbf{Navier-Stokes}} \\
    \cmidrule(lr){2-5} \cmidrule(lr){6-8}
    & CRPS & MSE & SSR & Time [s] & CRPS & MSE & SSR \\
    \midrule
    Perturbation & 
            0.281 {\stdsize$\pm$ 0.004} & 0.180 {\stdsize$\pm$ 0.011} & 0.411 {\stdsize$\pm$ 0.046} & 0.4241 & 
            0.090 {\stdsize$\pm$ 0.001} & 0.028 {\stdsize$\pm$ 0.000} & 0.448 {\stdsize$\pm$ 0.002} \\
    Dropout & 
            0.267 {\stdsize$\pm$ 0.003} & \secondbest{0.164 {\stdsize$\pm$ 0.004}} & 0.406 {\stdsize$\pm$ 0.042} & 0.4241 &
            \secondbest{0.078 {\stdsize$\pm$ 0.001}} & \secondbest{0.027 {\stdsize$\pm$ 0.001}} & \secondbest{0.715 {\stdsize$\pm$ 0.005}} \\
    DDPM & 
            0.246 {\stdsize$\pm$ 0.005} & 0.177 {\stdsize$\pm$ 0.005} & 0.674 {\stdsize$\pm$ 0.011} & 0.3054 &
            0.180 {\stdsize$\pm$ 0.004} & 0.105 {\stdsize$\pm$ 0.010} & 0.573 {\stdsize$\pm$ 0.001} \\
    MCVD & 
            \best{0.216} & \best{0.161} & \secondbest{0.926} & 79.167 &  
            0.154 {\stdsize$\pm$ 0.043} & 0.070 {\stdsize$\pm$ 0.033} & 0.524 {\stdsize$\pm$ 0.064} \\
    \method  & 
            \secondbest{0.224 {\stdsize$\pm$ 0.001}} & 0.173 {\stdsize$\pm$ 0.001} & \best{1.033 {\stdsize$\pm$ 0.005}} &  4.6722 & 
            \best{0.067 {\stdsize$\pm$ 0.003}} & \best{0.022 {\stdsize$\pm$ 0.002}} & \best{0.877 {\stdsize$\pm$ 0.006}} \\
    \bottomrule
    \end{tabular}
}
\end{table}

\paragraph{Baselines.}
We compare our method against both direct applications of standard diffusion models to dynamics forecasting and methods to ensemble the ``barebone'' backbone network of each dataset. The network operating in ``barebone'' form means that there is no involvement of diffusion.
\begin{itemize}[leftmargin=*]
    \item \textbf{DDPM~\cite{ho2020ddpm}}:  both next-step (image-like problem) as well as multi-step (video-like problem) prediction mode (we report the best model version only in the results).
    \item 
    \textbf{MCVD~\cite{voleti2022mcvd}:} conditional video diffusion model for video prediction (in ``concat'' mode).
    \item \textbf{Dropout~\cite{gal2016dropout}:}
    Ensemble multi-step forecasting based on enabling dropout of the barebone backbone network at inference time.
    \item \textbf{Perturbation~\cite{pathak2022fourcastnet}:}
    Ensemble multi-step forecasting based on random perturbations of the initial conditions/inputs with a fixed variance of the barebone backbone network.
\end{itemize}
MCVD and the multi-step DDPM variant predict the timesteps $\xt[t+1:t+h]$ based on $\xt[\inputtimesteps]$, which is the standard approach in video (prediction) diffusion models.
The barebone backbone network baselines are time-conditioned forecasters (similarly to the \method\ forecaster) trained on the multi-step objective $\E_{i \sim \uniform{1}{h}, \xt[\inputtimesteps,t+i]\sim \dataspace}\norm{\nn(\xt, i) - \xt[t+i]}^2$ from scratch.
We found it to perform very similarly to predicting all $h$ horizon timesteps at once in a single forward pass (not shown).
In all experiments, we observed that the single-step forecasting ($h=1$) version of the barebone network yielded significantly lower performance compared to any of the multi-step training approaches (see Appendix~\ref{sec:benchmark-deterministic}).
More details of the implementation are provided in Appendix~\ref{sec:implementationdetails}.
\paragraph{Evaluation.}
We evaluate the models on the best validation Continuous Ranked Probability Score (CRPS)~\cite{matheson1976crps}, which is a proper scoring rule and a popular metric in the probabilistic forecasting literature~\cite{gneiting2014Probabilistic, bezenac2020normalizing, Rasul2021AutoregressiveDD, rasp2018postprocessing, scher2021ensemble}. The CRPS is computed by generating a 20-member ensemble (i.e. 20 samples are drawn per batch element), while we generate a 50-member ensemble for final model selection between different hyperparameter runs.
We also use a 50-member ensemble for evaluation on the test datasets to compute the CRPS, mean squared error (MSE), and spread-skill ratio (SSR). The MSE is computed on the ensemble mean prediction. The SSR is defined as the ratio of the square root of the ensemble variance to the corresponding ensemble RMSE.
It serves as a measure of the reliability of the ensemble, where values smaller than 1 indicate underdispersion (i.e. the probabilistic forecast is overconfident in its forecasts), and larger values overdispersion~\cite{fortin2014ssr, garg2022weatherbenchprob}.
On the Navier-Stokes and spring mesh datasets, models are evaluated by autogressively forecasting the full test trajectories of length 64 and 804, respectively.
For the SST dataset, all models are evaluated on forecasts of up to 7 days. We do not explore more long-term SST forecasts because the chaotic nature of the system, and the fact that we only use regional patches, inherently limits predictability.

\begin{table}[t]
  \caption{Spring Mesh results. Both methods are trained on a horizon of $h=134$ timesteps and evaluated how well they forecast the full test trajectories of 804 steps. %
    Despite several attempts with varying configurations over the number of diffusion steps, learning rates, and diffusion schedules, none of the DDPM or MCVD diffusion models converged.  Using the same CNN architecture, our MSE results significantly surpass the ones reported in Fig.~8 of~\cite{otness21nnbenchmark}, where the CNN diverged or attained a very poor MSE. This is likely because of the multi-step training approach that we use, while the single-step prediction approach ($h=1$) in~\cite{otness21nnbenchmark} can generate unstable rollouts. 
    }
    \label{tab:spring-mesh-results}
    \centering
    \resizebox{\textwidth}{!}{
  \begin{tabular}{lcccccc}
    \toprule
    \multirow{2}{*}{\textbf{Method}} & \multicolumn{3}{c}{\textbf{Test}} & \multicolumn{3}{c}{\textbf{Out of Distribution}} \\
    \cmidrule(lr){2-4} \cmidrule(lr){5-7}
    & CRPS & MSE & SSR & CRPS & MSE & SSR \\
    \midrule
    \multirow{1}{*}{Dropout} & 
        {0.0138 {\stdsize$\pm$ 0.0006}} & {7.27e-4 {\stdsize$\pm$ 6.8e-5}} & \best{1.01 {\stdsize$\pm$ 0.02}} & 
        {0.0448 {\stdsize$\pm$ 0.0007}} & {7.08e-3 {\stdsize$\pm$ 1.7e-4}} & {0.70 {\stdsize$\pm$ 0.01}} \\
    \multirow{1}{*}{\method} & 
        \best{0.0103 {\stdsize$\pm$ 0.0022}} & \best{4.20e-4 {\stdsize$\pm$ 2.1e-4}} & {1.13 {\stdsize$\pm$ 0.08}} & 
        \best{0.0292 {\stdsize$\pm$ 0.0009}} & \best{3.54e-3 {\stdsize$\pm$ 2.0e-4}} & \best{0.82 {\stdsize$\pm$ 0.00}} \\
    \bottomrule
  \end{tabular}
  }
\end{table}

\subsection{Results}
\paragraph{Quantitative Results.}
We present the time-averaged metrics for the SST and Navier-Stokes dataset in Table~\ref{tab:sst-navier-stokes-results} and for the Spring Mesh dataset in Table~\ref{tab:spring-mesh-results}.
\method\ performs best on the Navier-Stokes dataset, while coming in a close second on the SST dataset after MCVD, in terms of CRPS. 
Since MCVD uses $1000$ diffusion steps\footnote{This is the default, we were not able to successfully train MCVD models with fewer diffusion steps.}, it is slower to sample from at inference time than from our \method\, which is trained with less than $50$ diffusion steps. 
The DDPM model for the SST dataset is fairly efficient because it only uses 5 diffusion steps but lags in terms of performance.
Thanks to the time-conditioned nature of \method's interpolator and forecaster nets, memory is not an issue when scaling our framework to long horizons. In the spring mesh dataset, we train with a horizon of $134$ and evaluate long trajectories of $804$ steps.
Our method beats the barebone network baseline, with a larger margin on the out-of-distribution test dataset. It is worth noting that our reported MSE scores are significantly better than the ones reported in~\cite{otness21nnbenchmark}, likely due to multi-step training being superior to the single-step forecasting approach of~\cite{otness21nnbenchmark} (see Appendix~\ref{sec:benchmark-deterministic} for more details).
For the out-of-distribution test set of the Navier-Stokes benchmark, the results are almost identical to the one in Tab.~\ref{tab:sst-navier-stokes-results}, so we show them in Appendix~\ref{sec:appendix-navier-stokes-ood}.
\paragraph{Qualitative Results.}
In dynamics forecasting, long-range forecasts of ML models often suffer from blurriness (or might even diverge when using autoregressive models). 
In Fig.~\ref{fig:navier-stokes-snapshots-qualitative} we show exemplary samples for the best baseline (Dropout) and \method\ as well as the corresponding ground truth at five different timesteps from a complete Navier-Stokes trajectory forecast. Our method can reproduce the true dynamics over the full trajectory and does so better than the baseline, especially for fine-scale patterns such as the tails of the flow after the right-most obstacle.
The corresponding full video can be found at \href{https://drive.google.com/file/d/1xklVs42Ii18I8SVT0f1ZmAKR159qiHG_/view?usp=share_link}{this URL}.

\paragraph{Increasing the forecasted resolution.}
Motivated by the continuous-time nature of \method, we aim to study in this experiment whether it is possible to forecast skillfully beyond the resolution given by the data. Here, we forecast the same Navier-Stokes trajectory shown in Fig.~\ref{fig:navier-stokes-snapshots-qualitative} but at $8\times$ resolution (i.e. 512 timesteps instead of 64 are forecasted in total). This behavior can be achieved by either changing the sampling trajectory $[i_n]_{n=0}^{N-1}$ or by including additional output timesteps, $J$, for the refinement step of Alg.~\ref{alg:sa2}.
In this case, we choose to do the latter and find the resulting forecast to be visibly pleasing and temporally consistent; see the full video at \href{https://drive.google.com/file/d/1QXo0cLalQxtmsCjuWpMwQcgrGJt99s_P/view?usp=share_link}{this URL}.

Note that we hope that our probabilistic forecasting model can capture any of the possible, uncertain futures instead of forecasting their mean, as a deterministic model would do. As a result, some long-term rollout samples are expected to deviate from the ground truth. For example, see the velocity at $t=3.70$ in the \href{https://drive.google.com/file/d/1QXo0cLalQxtmsCjuWpMwQcgrGJt99s_P/view?usp=share_link}{video}. It is reassuring that our samples show sufficient variation, but also cover the ground truth quite well (sample 1).

\begin{wraptable}{r}{6.15cm}
  \centering
  \vspace{-3mm}
  \caption{CRPS test scores for the SST dataset using different number of training years. 2, 4, 37 refers to training from 2017, 2015, 1982 until the end of 2018, respectively. The standard deviations are around $0.01$ or lower.}
  \label{tab:data-efficiency}
  \begin{tabular}{cccc}
    \toprule
    \textbf{\# {years}} & \textbf{2} & \textbf{4} & \textbf{37} \\
    \midrule
    MCVD       & 0.262 &  0.233 & 0.216  \\
    \method\   & 0.239 &  0.234 & 0.225  \\
    \bottomrule
  \end{tabular}
  \vspace{-3mm}
\end{wraptable}

\paragraph{Ablations.}
In Appendix \ref{sec:ablations}, we ablate various components of our method. This includes showing that inference stochasticity in the interpolation network is crucial for performance.
Our experiments in Appendix~\ref{sec:cold-vs-naive-ablation} confirm the findings of~\cite{bansal2022cold} regarding the superiority of cold sampling (see Alg.~\ref{alg:sa2}) compared to the naive sampling (Alg. \ref{alg:naive-sampling}) algorithm.
We provide a theoretical justification in Appendix \ref{appendix:ode-cold-is-better}, where we show that cold sampling has first-order discretization error, whereas naive sampling does not.

We also study the choice of the interpolation schedule: For the Navier-Stokes and spring mesh datasets,  it was sufficient to use the simplest possible schedule $\interpolationschedule=[i_n]_{n=0}^{N-1}=[j]_{j=0}^{h-1}$, where the number of diffusion steps and the horizon are equal. For the SST dataset, using $k$ additional diffusion steps corresponding to floating-point $i_k$, especially such that $i_k < 1$ for all $k$, improved performance significantly.
Similarly to the finding of DDIM~\cite{song2021ddim} for Gaussian diffusion, we show that the sampling trajectories of \method\ can be accelerated by skipping intermediate diffusion steps, resulting in a clear trade-off between accuracy and speed.
Finally, we verify that the forecaster's prediction of $\xt[t+h]$ for a given diffusion step $n$ and associated input $\xhatt[t+i_n]$ usually improves as $n$ increases, i.e. as we approach the end of the reverse process and $\xt[t+i_n]$ gets closer in time to $\xt[t+h]$. 
\begin{figure}[t]
  \centering
    \includegraphics[width=\textwidth]{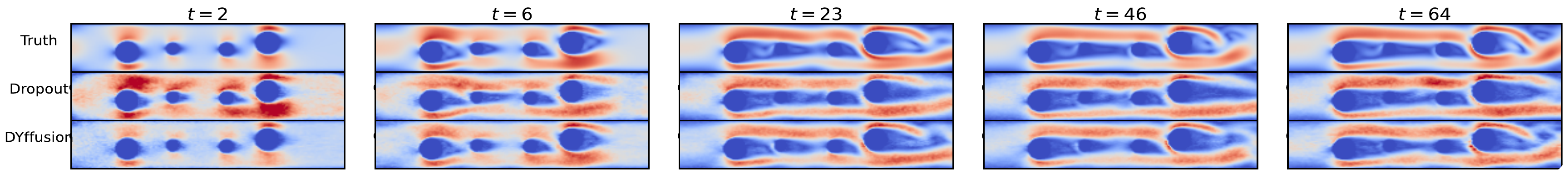}
  \caption{
  Qualitative forecasts for timesteps 2, 6, 24, 46, and 64 (last timestep) of the velocity norm of an example Navier-Stokes test trajectory. Here, we generate five sample trajectories for the best baseline (Dropout) and our method \method, both with $h=16$, and visualize the one with the best trajectory-average MSE for each of the methods. Our method (bottom row) can reproduce fine-scale details visibly better than the baseline (see e.g. right sides of the snapshots). 
  The corresponding video of the full trajectory, including the velocity and pressure fields, can be found at this Google Drive URL \small{\url{https://drive.google.com/file/d/1xklVs42Ii18I8SVT0f1ZmAKR159qiHG_/view?usp=share_link}}.
  }
  \label{fig:navier-stokes-snapshots-qualitative}
\end{figure}

The variance in the performance of the standard diffusion models across datasets may be intriguing.
One possible explanation could be the size of the training dataset, since the SST dataset is more than ten times larger than the Navier-Stokes and spring mesh datasets, respectively. Sufficient training data may be a key factor that enables Gaussian diffusion models to effectively learn the reverse process from noise to the data space.
We explore this hypothesis in Table~\ref{tab:data-efficiency}, where we retrain MCVD as well as \method's $\nn$ and $\nninterpolate$ on small subsets of the SST training set. We find that \method's performance degrades more gracefully under limited training data, especially when using only two years of data (or $\approx 8,000$ training data points).

We found it very challenging to train video diffusion models on long horizons. A reason could be that the channel dimension of the video diffusion backbone is scaled by the training horizon, which increases the problem dimensionality and complexity considerably (see Appendix~\ref{sec:modelingcomplexity}).
Unlike common practice in natural image or video problems, we do not threshold the predictions of any diffusion model, as the data range is unbounded. However, this thresholding has been shown to be an important implementation detail that stabilizes diffusion model sampling for complex problems in the domain of natural images~\cite{lou2023reflected}. This could be the reason why the studied diffusion model baselines produce poor predictions for the Navier-Stokes and spring mesh datasets.
\section{Conclusion} %
We have introduced \method, a novel dynamics-informed diffusion model for improved probabilistic spatiotemporal forecasting.
In contrast to most prior work on probabilistic forecasting that often generates long-range forecasts via iterating through an autoregressive model, we tailor diffusion models to naturally support long-range forecasts during inference time by coupling their diffusion process with the dynamical nature of the data.
Our study presents the first comprehensive evaluation of diffusion models for the task of spatiotemporal forecasting.
Finally, we believe that our dynamics-informed diffusion model can serve as a more general source of inspiration for tailoring diffusion models to other conditional generation problems, such as super-resolution, where the data itself can natively inform an inductive bias for the diffusion model.

\paragraph{Limitations.}
While our proposed framework can be applied to any dynamics forecasting problem as an alternative to autoregressive models, it does not support, in its current form, problems where the output space is different from the input space.
Additionally, although it requires fewer diffusion steps compared to Gaussian diffusion models to achieve similar performance, the need for additional forward passes through the interpolator network diminishes some of these advantages. 
When prioritizing inference speed, approaches that only require a single forward pass to forecast outperform sequential sampling of diffusion models, including \method{}.

\paragraph{Future work.}
Cold Diffusion~\cite{bansal2022cold} indicates that deterministic data degradations cannot meet the performance of stochastic (i.e. Gaussian noise-based) degradations. 
In our work, we have experimented with using an interpolator network with Monte Carlo dropout enabled during inference time to achieve a stochastic forward process, which we have shown to be important for optimal performance. 
Exploring more advanced approaches for introducing stochasticity into our general framework presents an interesting avenue for future work. 
A more advanced approach could draw insights from the hypernetwork-based parameterization of the latent space from~\cite{wan2023evolve}.
Furthermore, given our finding of the equivalence of cold sampling with the Euler solver (\ref{appendix:ode-cold-from-euler}), it could be interesting to consider more advanced ODE solvers for improved sampling from \method.
Recent advances in neural architectures for physical systems~\cite{brandstetter2023clifford, janny2023eagle} are complementary to our work. Using such alternative neural architectures as interpolator or forecaster networks within our framework could be a promising approach to further improve performance.

\section*{Acknowledgements}
This work was supported in part by the U.S. Army Research Office
under Army-ECASE award W911NF-07-R-0003-03, the U.S. Department Of Energy, Office of Science, IARPA HAYSTAC Program, NSF Grants \#2205093, \#2146343, and \#2134274.

NOAA OI SST V2 High Resolution Dataset data provided by the NOAA PSL, Boulder, Colorado, USA, from their website at \url{https://psl.noaa.gov}.

\newpage
\bibliography{references}

\begin{thebibliography}{77}
\providecommand{\natexlab}[1]{#1}
\providecommand{\url}[1]{\texttt{#1}}
\expandafter\ifx\csname urlstyle\endcsname\relax
  \providecommand{\doi}[1]{doi: #1}\else
  \providecommand{\doi}{doi: \begingroup \urlstyle{rm}\Url}\fi

\bibitem[Alcaraz and Strodthoff(2023)]{lopez2023diffusionimputation}
Juan~Lopez Alcaraz and Nils Strodthoff.
\newblock Diffusion-based time series imputation and forecasting with
  structured state space models.
\newblock \emph{Transactions on Machine Learning Research}, 2023.
\newblock ISSN 2835-8856.
\newblock URL \url{https://openreview.net/forum?id=hHiIbk7ApW}.

\bibitem[Bansal et~al.(2023)Bansal, Borgnia, Chu, Li, Kazemi, Huang, Goldblum,
  Geiping, and Goldstein]{bansal2022cold}
Arpit Bansal, Eitan Borgnia, Hong-Min Chu, Jie~S. Li, Hamid Kazemi, Furong
  Huang, Micah Goldblum, Jonas Geiping, and Tom Goldstein.
\newblock Cold diffusion: Inverting arbitrary image transforms without noise.
\newblock \emph{Advances in Neural Information Processing Systems}, 2023.

\bibitem[Bauer et~al.(2015)Bauer, Thorpe, and Brunet]{bauer2015thequiet}
Peter Bauer, Alan Thorpe, and Gilbert Brunet.
\newblock The quiet revolution of numerical weather prediction.
\newblock \emph{Nature}, 525\penalty0 (7567):\penalty0 47--55, Sep 2015.
\newblock ISSN 1476-4687.
\newblock \doi{10.1038/nature14956}.
\newblock URL \url{https://doi.org/10.1038/nature14956}.

\bibitem[Bevacqua et~al.(2023)Bevacqua, Suarez-Gutierrez, J{\'e}z{\'e}quel,
  Lehner, Vrac, Yiou, and Zscheischler]{bevacqua2023smiles}
Emanuele Bevacqua, Laura Suarez-Gutierrez, Agla{\'e} J{\'e}z{\'e}quel, Flavio
  Lehner, Mathieu Vrac, Pascal Yiou, and Jakob Zscheischler.
\newblock Advancing research on compound weather and climate events via large
  ensemble model simulations.
\newblock \emph{Nature Communications}, 14\penalty0 (1):\penalty0 2145, April
  2023.

\bibitem[Brandstetter et~al.(2022)Brandstetter, Worrall, and
  Welling]{brandstetter2022message}
Johannes Brandstetter, Daniel~E. Worrall, and Max Welling.
\newblock Message passing neural {PDE} solvers.
\newblock In \emph{International Conference on Learning Representations}, 2022.
\newblock URL \url{https://openreview.net/forum?id=vSix3HPYKSU}.

\bibitem[Brandstetter et~al.(2023)Brandstetter, van~den Berg, Welling, and
  Gupta]{brandstetter2023clifford}
Johannes Brandstetter, Rianne van~den Berg, Max Welling, and Jayesh~K Gupta.
\newblock Clifford neural layers for {PDE} modeling.
\newblock In \emph{The Eleventh International Conference on Learning
  Representations}, 2023.
\newblock URL \url{https://openreview.net/forum?id=okwxL_c4x84}.

\bibitem[Brown et~al.(2015)Brown, Lund, Cai, Reed, Zagona, Ostfeld, Hall,
  Characklis, Yu, and Brekke]{brown2015future}
Casey~M Brown, Jay~R Lund, Ximing Cai, Patrick~M Reed, Edith~A Zagona, Avi
  Ostfeld, Jim Hall, Gregory~W Characklis, Winston Yu, and Levi Brekke.
\newblock The future of water resources systems analysis: Toward a scientific
  framework for sustainable water management.
\newblock \emph{Water resources research}, 51\penalty0 (8):\penalty0
  6110--6124, 2015.

\bibitem[Chattopadhyay and Hassanzadeh(2022)]{chattopadhyay2022why}
Ashesh Chattopadhyay and Pedram Hassanzadeh.
\newblock Why are deep learning-based models of geophysical turbulence
  long-term unstable?
\newblock \emph{NeurIPS, Machine Learning for Physical Sciences}, 2022.

\bibitem[Chattopadhyay et~al.(2020)Chattopadhyay, Mustafa, Hassanzadeh, and
  Kashinath]{chattopadhyay2020deep}
Ashesh Chattopadhyay, Mustafa Mustafa, Pedram Hassanzadeh, and Karthik
  Kashinath.
\newblock Deep spatial transformers for autoregressive data-driven forecasting
  of geophysical turbulence.
\newblock In \emph{Proceedings of the 10th International Conference on Climate
  Informatics}, pages 106--112, 2020.

\bibitem[Daras et~al.(2022)Daras, Delbracio, Talebi, Dimakis, and
  Milanfar]{daras2022softdiffusion}
Giannis Daras, Mauricio Delbracio, Hossein Talebi, Alexandros~G. Dimakis, and
  Peyman Milanfar.
\newblock Soft diffusion: Score matching for general corruptions.
\newblock 2022.
\newblock URL \url{https://arxiv.org/abs/2209.05442}.

\bibitem[de~B\'{e}zenac et~al.(2018)de~B\'{e}zenac, Pajot, and
  Gallinari]{de2018physicalsstbaseline}
Emmanuel de~B\'{e}zenac, Arthur Pajot, and Patrick Gallinari.
\newblock Deep learning for physical processes: Incorporating prior scientific
  knowledge.
\newblock In \emph{International Conference on Learning Representations}, 2018.
\newblock URL \url{https://openreview.net/forum?id=By4HsfWAZ}.

\bibitem[de~B\'{e}zenac et~al.(2020)de~B\'{e}zenac, Rangapuram, Benidis,
  Bohlke-Schneider, Kurle, Stella, Hasson, Gallinari, and
  Januschowski]{bezenac2020normalizing}
Emmanuel de~B\'{e}zenac, Syama~Sundar Rangapuram, Konstantinos Benidis, Michael
  Bohlke-Schneider, Richard Kurle, Lorenzo Stella, Hilaf Hasson, Patrick
  Gallinari, and Tim Januschowski.
\newblock Normalizing kalman filters for multivariate time series analysis.
\newblock In \emph{Advances in Neural Information Processing Systems},
  volume~33, pages 2995--3007, 2020.
\newblock URL
  \url{https://proceedings.neurips.cc/paper_files/paper/2020/file/1f47cef5e38c952f94c5d61726027439-Paper.pdf}.

\bibitem[El~Ghaoui et~al.(2021)El~Ghaoui, Gu, Travacca, Askari, and
  Tsai]{el2021implicit}
Laurent El~Ghaoui, Fangda Gu, Bertrand Travacca, Armin Askari, and Alicia Tsai.
\newblock Implicit deep learning.
\newblock \emph{SIAM Journal on Mathematics of Data Science}, 3\penalty0
  (3):\penalty0 930--958, 2021.

\bibitem[Erichson et~al.(2019)Erichson, Muehlebach, and
  Mahoney]{erichson2019pimlLyapunov}
N.~Benjamin Erichson, Michael Muehlebach, and Michael~W. Mahoney.
\newblock Physics-informed autoencoders for lyapunov-stable fluid flow
  prediction.
\newblock 2019.
\newblock URL \url{https://arxiv.org/abs/1905.10866}.

\bibitem[Espeholt et~al.(2022)Espeholt, Agrawal, S{\o}nderby, Kumar, Heek,
  Bromberg, Gazen, Carver, Andrychowicz, Hickey, Bell, and
  Kalchbrenner]{espeholt2022metnet2}
Lasse Espeholt, Shreya Agrawal, Casper S{\o}nderby, Manoj Kumar, Jonathan Heek,
  Carla Bromberg, Cenk Gazen, Rob Carver, Marcin Andrychowicz, Jason Hickey,
  Aaron Bell, and Nal Kalchbrenner.
\newblock Deep learning for twelve hour precipitation forecasts.
\newblock \emph{Nature Communications}, 13\penalty0 (1):\penalty0 5145, Sep
  2022.
\newblock ISSN 2041-1723.
\newblock \doi{10.1038/s41467-022-32483-x}.
\newblock URL \url{https://doi.org/10.1038/s41467-022-32483-x}.

\bibitem[Fortin et~al.(2014)Fortin, Abaza, Anctil, and Turcotte]{fortin2014ssr}
V.~Fortin, M.~Abaza, F.~Anctil, and R.~Turcotte.
\newblock Why should ensemble spread match the rmse of the ensemble mean?
\newblock \emph{Journal of Hydrometeorology}, 15\penalty0 (4):\penalty0 1708 --
  1713, 2014.
\newblock \doi{https://doi.org/10.1175/JHM-D-14-0008.1}.
\newblock URL
  \url{https://journals.ametsoc.org/view/journals/hydr/15/4/jhm-d-14-0008_1.xml}.

\bibitem[Gal and Ghahramani(2016)]{gal2016dropout}
Yarin Gal and Zoubin Ghahramani.
\newblock Dropout as a bayesian approximation: Representing model uncertainty
  in deep learning.
\newblock In \emph{international conference on machine learning}, pages
  1050--1059. PMLR, 2016.

\bibitem[Garg et~al.(2022)Garg, Rasp, and Thuerey]{garg2022weatherbenchprob}
Sagar Garg, Stephan Rasp, and Nils Thuerey.
\newblock Weatherbench probability: A benchmark dataset for probabilistic
  medium-range weather forecasting along with deep learning baseline models.
\newblock \emph{arXiv preprint arXiv:2205.00865}, 2022.

\bibitem[Gneiting and Katzfuss(2014)]{gneiting2014Probabilistic}
Tilmann Gneiting and Matthias Katzfuss.
\newblock Probabilistic forecasting.
\newblock \emph{Annual Review of Statistics and Its Application}, 1\penalty0
  (1):\penalty0 125--151, 2014.

\bibitem[Gneiting and Raftery(2005)]{gneiting2005weather}
Tilmann Gneiting and Adrian~E. Raftery.
\newblock Weather forecasting with ensemble methods.
\newblock \emph{Science}, 310\penalty0 (5746):\penalty0 248--249, 2005.
\newblock \doi{10.1126/science.1115255}.
\newblock URL \url{https://www.science.org/doi/abs/10.1126/science.1115255}.

\bibitem[Graubner et~al.(2022)Graubner, Kamyar~Azizzadenesheli, Pathak,
  Mardani, Pritchard, Kashinath, and Anandkumar]{graubner2022calibration}
Andre Graubner, Kamyar Kamyar~Azizzadenesheli, Jaideep Pathak, Morteza Mardani,
  Mike Pritchard, Karthik Kashinath, and Anima Anandkumar.
\newblock Calibration of large neural weather models.
\newblock In \emph{NeurIPS 2022 Workshop on Tackling Climate Change with
  Machine Learning}, 2022.

\bibitem[Ham et~al.(2019)Ham, Kim, and Luo]{ham2019dlenso}
Yoo-Geun Ham, Jeong-Hwan Kim, and Jing-Jia Luo.
\newblock Deep learning for multi-year enso forecasts.
\newblock \emph{Nature}, 573:\penalty0 568--572, 9 2019.

\bibitem[Han et~al.(2022)Han, Gao, Pfaff, Wang, and Liu]{han2022predicting}
Xu~Han, Han Gao, Tobias Pfaff, Jian-Xun Wang, and Liping Liu.
\newblock Predicting physics in mesh-reduced space with temporal attention.
\newblock In \emph{International Conference on Learning Representations}, 2022.
\newblock URL \url{https://openreview.net/forum?id=XctLdNfCmP}.

\bibitem[Harvey et~al.(2022)Harvey, Naderiparizi, Masrani, Weilbach, and
  Wood]{harvey2022flexiblevideos}
William Harvey, Saeid Naderiparizi, Vaden Masrani, Christian Weilbach, and
  Frank Wood.
\newblock Flexible diffusion modeling of long videos.
\newblock \emph{Advances in Neural Information Processing Systems}, 2022.

\bibitem[Ho et~al.(2020)Ho, Jain, and Abbeel]{ho2020ddpm}
Jonathan Ho, Ajay Jain, and Pieter Abbeel.
\newblock Denoising diffusion probabilistic models.
\newblock In \emph{Advances in Neural Information Processing Systems}, 2020.
\newblock URL
  \url{https://proceedings.neurips.cc/paper/2020/file/4c5bcfec8584af0d967f1ab10179ca4b-Paper.pdf}.

\bibitem[Ho et~al.(2022{\natexlab{a}})Ho, Chan, Saharia, Whang, Gao, Gritsenko,
  Kingma, Poole, Norouzi, Fleet, and Salimans]{ho2022imagenvideo}
Jonathan Ho, William Chan, Chitwan Saharia, Jay Whang, Ruiqi Gao, Alexey
  Gritsenko, Diederik~P. Kingma, Ben Poole, Mohammad Norouzi, David~J. Fleet,
  and Tim Salimans.
\newblock Imagen video: High definition video generation with diffusion models.
\newblock 2022{\natexlab{a}}.
\newblock URL \url{https://arxiv.org/abs/2210.02303}.

\bibitem[Ho et~al.(2022{\natexlab{b}})Ho, Salimans, Gritsenko, Chan, Norouzi,
  and Fleet]{ho2022videodiffusion}
Jonathan Ho, Tim Salimans, Alexey Gritsenko, William Chan, Mohammad Norouzi,
  and David~J Fleet.
\newblock Video diffusion models.
\newblock \emph{Advances in Neural Information Processing Systems},
  2022{\natexlab{b}}.

\bibitem[Hoogeboom and Salimans(2023)]{hoogeboom2023blurring}
Emiel Hoogeboom and Tim Salimans.
\newblock Blurring diffusion models.
\newblock In \emph{The Eleventh International Conference on Learning
  Representations}, 2023.

\bibitem[Hu et~al.(2023)Hu, Chen, Wang, and Li]{hu2023swinrnn}
Yuan Hu, Lei Chen, Zhibin Wang, and Hao Li.
\newblock Swinvrnn: A data-driven ensemble forecasting model via learned
  distribution perturbation.
\newblock \emph{Journal of Advances in Modeling Earth Systems}, 15\penalty0
  (2):\penalty0 e2022MS003211, 2023.
\newblock \doi{https://doi.org/10.1029/2022MS003211}.
\newblock URL
  \url{https://agupubs.onlinelibrary.wiley.com/doi/abs/10.1029/2022MS003211}.
\newblock e2022MS003211 2022MS003211.

\bibitem[Huang et~al.(2021)Huang, Liu, Banzon, Freeman, Graham, Hankins, Smith,
  and Zhang]{huang2021oisstv2}
Boyin Huang, Chunying Liu, Viva Banzon, Eric Freeman, Garrett Graham, Bill
  Hankins, Tom Smith, and Huai-Min Zhang.
\newblock Improvements of the daily optimum interpolation sea surface
  temperature (doisst) version 2.1.
\newblock \emph{Journal of Climate}, 34\penalty0 (8):\penalty0 2923 -- 2939,
  2021.
\newblock \doi{https://doi.org/10.1175/JCLI-D-20-0166.1}.
\newblock URL
  \url{https://journals.ametsoc.org/view/journals/clim/34/8/JCLI-D-20-0166.1.xml}.

\bibitem[Janny et~al.(2023)Janny, B{\'e}n{\'e}teau, Nadri, Digne, Thome, and
  Wolf]{janny2023eagle}
Steeven Janny, Aur{\'e}lien B{\'e}n{\'e}teau, Madiha Nadri, Julie Digne,
  Nicolas Thome, and Christian Wolf.
\newblock {EAGLE}: Large-scale learning of turbulent fluid dynamics with mesh
  transformers.
\newblock In \emph{The Eleventh International Conference on Learning
  Representations}, 2023.
\newblock URL \url{https://openreview.net/forum?id=mfIX4QpsARJ}.

\bibitem[Karras et~al.(2022)Karras, Aittala, Aila, and Laine]{karras2022edm}
Tero Karras, Miika Aittala, Timo Aila, and Samuli Laine.
\newblock Elucidating the design space of diffusion-based generative models.
\newblock In \emph{Proc. NeurIPS}, 2022.

\bibitem[Kashinath et~al.(2021)Kashinath, Mustafa, Albert, Wu, Jiang,
  Esmaeilzadeh, Azizzadenesheli, Wang, Chattopadhyay, Singh, Manepalli,
  Chirila, Yu, Walters, White, Xiao, Tchelepi, Marcus, Anandkumar, Hassanzadeh,
  and Prabhat]{kashinath2021piml}
K.~Kashinath, M.~Mustafa, A.~Albert, J-L. Wu, C.~Jiang, S.~Esmaeilzadeh,
  K.~Azizzadenesheli, R.~Wang, A.~Chattopadhyay, A.~Singh, A.~Manepalli,
  D.~Chirila, R.~Yu, R.~Walters, B.~White, H.~Xiao, H.~A. Tchelepi, P.~Marcus,
  A.~Anandkumar, P.~Hassanzadeh, and null Prabhat.
\newblock Physics-informed machine learning: case studies for weather and
  climate modelling.
\newblock \emph{Philosophical Transactions of the Royal Society A:
  Mathematical, Physical and Engineering Sciences}, 379\penalty0
  (2194):\penalty0 20200093, 2021.
\newblock \doi{10.1098/rsta.2020.0093}.
\newblock URL
  \url{https://royalsocietypublishing.org/doi/abs/10.1098/rsta.2020.0093}.

\bibitem[Keisler(2022)]{keisler2022forecasting}
Ryan Keisler.
\newblock Forecasting global weather with graph neural networks.
\newblock 2022.
\newblock URL \url{https://arxiv.org/abs/2202.07575}.

\bibitem[Kochkov et~al.(2021)Kochkov, Smith, Alieva, Wang, Brenner, and
  Hoyer]{kochkov2021mlfluids}
Dmitrii Kochkov, Jamie~A. Smith, Ayya Alieva, Qing Wang, Michael~P. Brenner,
  and Stephan Hoyer.
\newblock Machine learning–accelerated computational fluid dynamics.
\newblock \emph{Proceedings of the National Academy of Sciences}, 118\penalty0
  (21):\penalty0 e2101784118, 2021.
\newblock \doi{10.1073/pnas.2101784118}.
\newblock URL \url{https://www.pnas.org/doi/abs/10.1073/pnas.2101784118}.

\bibitem[Lam et~al.(2022)Lam, Sanchez-Gonzalez, Willson, Wirnsberger,
  Fortunato, Pritzel, Ravuri, Ewalds, Alet, Eaton-Rosen, Hu, Merose, Hoyer,
  Holland, Stott, Vinyals, Mohamed, and Battaglia]{lam2022graphcast}
Remi Lam, Alvaro Sanchez-Gonzalez, Matthew Willson, Peter Wirnsberger, Meire
  Fortunato, Alexander Pritzel, Suman Ravuri, Timo Ewalds, Ferran Alet, Zach
  Eaton-Rosen, Weihua Hu, Alexander Merose, Stephan Hoyer, George Holland,
  Jacklynn Stott, Oriol Vinyals, Shakir Mohamed, and Peter Battaglia.
\newblock {GraphCast}: Learning skillful medium-range global weather
  forecasting.
\newblock 2022.
\newblock URL \url{https://arxiv.org/abs/2212.12794}.

\bibitem[Lambert et~al.(2021)Lambert, Wilcox, Zhang, Pister, and
  Calandra]{lambert21longtermRL}
Nathan Lambert, Albert Wilcox, Howard Zhang, Kristofer S.~J. Pister, and
  Roberto Calandra.
\newblock Learning accurate long-term dynamics for model-based reinforcement
  learning.
\newblock In \emph{IEEE Conference on Decision and Control (CDC)}, 2021.

\bibitem[Lazo et~al.(2009)Lazo, Morss, and Demuth]{300BillionServed2009}
Jeffrey~K. Lazo, Rebecca~E. Morss, and Julie~L. Demuth.
\newblock 300 billion served: Sources, perceptions, uses, and values of weather
  forecasts.
\newblock \emph{Bulletin of the American Meteorological Society}, 90\penalty0
  (6):\penalty0 785 -- 798, 2009.
\newblock \doi{10.1175/2008BAMS2604.1}.
\newblock URL
  \url{https://journals.ametsoc.org/view/journals/bams/90/6/2008bams2604_1.xml}.

\bibitem[Leggett(2020)]{leggett2020united}
Jane~A Leggett.
\newblock The united nations framework convention on climate change, the kyoto
  protocol, and the paris agreement: a summary.
\newblock \emph{UNFCC: New York, NY, USA}, 2, 2020.

\bibitem[Lou and Ermon(2023)]{lou2023reflected}
Aaron Lou and Stefano Ermon.
\newblock Reflected diffusion models.
\newblock In \emph{International Conference on Machine Learning}, 2023.

\bibitem[Mamakoukas et~al.(2020)Mamakoukas, Abraham, and
  Murphey]{mamakoukas2020learningstable}
Giorgos Mamakoukas, Ian Abraham, and Todd Murphey.
\newblock Learning stable models for prediction and control.
\newblock \emph{IEEE Transactions on Robotics}, 2020.

\bibitem[Matheson and Winkler(1976)]{matheson1976crps}
James~E. Matheson and Robert~L. Winkler.
\newblock Scoring rules for continuous probability distributions.
\newblock \emph{Management Science}, 22\penalty0 (10):\penalty0 1087--1096,
  1976.

\bibitem[Nguyen et~al.(2023)Nguyen, Brandstetter, Kapoor, Gupta, and
  Grover]{nguyen2023climax}
Tung Nguyen, Johannes Brandstetter, Ashish Kapoor, Jayesh~K Gupta, and Aditya
  Grover.
\newblock Clima{X}: A foundation model for weather and climate.
\newblock \emph{International Conference on Machine Learning}, 2023.

\bibitem[Otness et~al.(2021)Otness, Gjoka, Bruna, Panozzo, Peherstorfer,
  Schneider, and Zorin]{otness21nnbenchmark}
Karl Otness, Arvi Gjoka, Joan Bruna, Daniele Panozzo, Benjamin Peherstorfer,
  Teseo Schneider, and Denis Zorin.
\newblock An extensible benchmark suite for learning to simulate physical
  systems.
\newblock \emph{Advances in Neural Information Processing Systems (NeurIPS
  2021) Track on Datasets and Benchmarks}, 2021.
\newblock URL \url{https://arxiv.org/abs/2108.07799}.

\bibitem[Pathak et~al.(2022)Pathak, Subramanian, Harrington, Raja,
  Chattopadhyay, Mardani, Kurth, Hall, Li, Azizzadenesheli, Hassanzadeh,
  Kashinath, and Anandkumar]{pathak2022fourcastnet}
Jaideep Pathak, Shashank Subramanian, Peter Harrington, Sanjeev Raja, Ashesh
  Chattopadhyay, Morteza Mardani, Thorsten Kurth, David Hall, Zongyi Li, Kamyar
  Azizzadenesheli, Pedram Hassanzadeh, Karthik Kashinath, and Animashree
  Anandkumar.
\newblock {FourCastNet}: A global data-driven high-resolution weather model
  using adaptive fourier neural operators.
\newblock 2022.
\newblock URL \url{https://arxiv.org/abs/2202.11214}.

\bibitem[Preisler and Westerling(2007)]{preisler2007statistical}
Haiganoush~K Preisler and Anthony~L Westerling.
\newblock Statistical model for forecasting monthly large wildfire events in
  western united states.
\newblock \emph{Journal of Applied Meteorology and Climatology}, 46\penalty0
  (7):\penalty0 1020--1030, 2007.

\bibitem[Ramesh et~al.(2022)Ramesh, Dhariwal, Nichol, Chu, and
  Chen]{ramesh2022dalle2}
Aditya Ramesh, Prafulla Dhariwal, Alex Nichol, Casey Chu, and Mark Chen.
\newblock Hierarchical text-conditional image generation with clip latents.
\newblock \emph{arXiv preprint arXiv:2204.06125}, 2022.

\bibitem[Rasp and Lerch(2018)]{rasp2018postprocessing}
Stephan Rasp and Sebastian Lerch.
\newblock Neural networks for postprocessing ensemble weather forecasts.
\newblock \emph{Monthly Weather Review}, 146\penalty0 (11):\penalty0 3885 --
  3900, 2018.

\bibitem[Rasp and Thuerey(2021)]{rasp2021datadriven}
Stephan Rasp and Nils Thuerey.
\newblock Data-driven medium-range weather prediction with a resnet pretrained
  on climate simulations: A new model for weatherbench.
\newblock \emph{Journal of Advances in Modeling Earth Systems}, 13\penalty0
  (2), 2021.
\newblock \doi{https://doi.org/10.1029/2020MS002405}.
\newblock URL
  \url{https://agupubs.onlinelibrary.wiley.com/doi/abs/10.1029/2020MS002405}.

\bibitem[Rasp et~al.(2020)Rasp, Dueben, Scher, Weyn, Mouatadid, and
  Thuerey]{Rasp2020weatherbench}
Stephan Rasp, Peter~D. Dueben, Sebastian Scher, Jonathan~A. Weyn, Soukayna
  Mouatadid, and Nils Thuerey.
\newblock {WeatherBench}: A benchmark data set for data-driven weather
  forecasting.
\newblock \emph{Journal of Advances in Modeling Earth Systems}, 2020.
\newblock URL \url{https://doi.org/10.1029\%2F2020ms002203}.

\bibitem[Rasul et~al.(2021)Rasul, Seward, Schuster, and
  Vollgraf]{Rasul2021AutoregressiveDD}
Kashif Rasul, Calvin Seward, Ingmar Schuster, and Roland Vollgraf.
\newblock Autoregressive denoising diffusion models for multivariate
  probabilistic time series forecasting.
\newblock In \emph{ICML}, 2021.

\bibitem[Ravuri et~al.(2021)Ravuri, Lenc, Willson, Kangin, Lam, Mirowski,
  Fitzsimons, Athanassiadou, Kashem, Madge, et~al.]{ravuri2021skilful}
Suman Ravuri, Karel Lenc, Matthew Willson, Dmitry Kangin, Remi Lam, Piotr
  Mirowski, Megan Fitzsimons, Maria Athanassiadou, Sheleem Kashem, Sam Madge,
  et~al.
\newblock Skilful precipitation nowcasting using deep generative models of
  radar.
\newblock \emph{Nature}, 597\penalty0 (7878):\penalty0 672--677, 2021.
\newblock URL \url{https://doi.org/10.1038/s41586-021-03854-z}.

\bibitem[Rissanen et~al.(2023)Rissanen, Heinonen, and
  Solin]{rissanen2023generative}
Severi Rissanen, Markus Heinonen, and Arno Solin.
\newblock Generative modelling with inverse heat dissipation.
\newblock In \emph{The Eleventh International Conference on Learning
  Representations}, 2023.

\bibitem[Rombach et~al.(2022)Rombach, Blattmann, Lorenz, Esser, and
  Ommer]{rombach2022stablediffusion}
Robin Rombach, Andreas Blattmann, Dominik Lorenz, Patrick Esser, and Bj{\"o}rn
  Ommer.
\newblock High-resolution image synthesis with latent diffusion models.
\newblock In \emph{IEEE Conference on Computer Vision and Pattern Recognition},
  pages 10684--10695, 2022.

\bibitem[Sanchez-Gonzalez et~al.(2020)Sanchez-Gonzalez, Godwin, Pfaff, Ying,
  Leskovec, and Battaglia]{sanchezgonzalez2020learning}
Alvaro Sanchez-Gonzalez, Jonathan Godwin, Tobias Pfaff, Rex Ying, Jure
  Leskovec, and Peter~W. Battaglia.
\newblock Learning to simulate complex physics with graph networks.
\newblock In \emph{International Conference on Machine Learning}, 2020.

\bibitem[Scher(2018)]{scher2018toward}
S.~Scher.
\newblock Toward data-driven weather and climate forecasting: Approximating a
  simple general circulation model with deep learning.
\newblock \emph{Geophysical Research Letters}, 45\penalty0 (22):\penalty0
  12,616--12,622, 2018.
\newblock \doi{https://doi.org/10.1029/2018GL080704}.
\newblock URL
  \url{https://agupubs.onlinelibrary.wiley.com/doi/abs/10.1029/2018GL080704}.

\bibitem[Scher and Messori(2019)]{scher2019generalization}
S.~Scher and G.~Messori.
\newblock Generalization properties of feed-forward neural networks trained on
  lorenz systems.
\newblock \emph{Nonlinear Processes in Geophysics}, 26\penalty0 (4):\penalty0
  381--399, 2019.
\newblock \doi{10.5194/npg-26-381-2019}.
\newblock URL \url{https://npg.copernicus.org/articles/26/381/2019/}.

\bibitem[Scher and Messori(2021)]{scher2021ensemble}
Sebastian Scher and Gabriele Messori.
\newblock Ensemble methods for neural network-based weather forecasts.
\newblock \emph{Journal of Advances in Modeling Earth Systems}, 13\penalty0
  (2), 2021.
\newblock \doi{https://doi.org/10.1029/2020MS002331}.
\newblock URL
  \url{https://agupubs.onlinelibrary.wiley.com/doi/abs/10.1029/2020MS002331}.

\bibitem[Singer et~al.(2022)Singer, Polyak, Hayes, Yin, An, Zhang, Hu, Yang,
  Ashual, Gafni, Parikh, Gupta, and Taigman]{singer2022makeavideo}
Uriel Singer, Adam Polyak, Thomas Hayes, Xi~Yin, Jie An, Songyang Zhang, Qiyuan
  Hu, Harry Yang, Oron Ashual, Oran Gafni, Devi Parikh, Sonal Gupta, and Yaniv
  Taigman.
\newblock Make-a-video: Text-to-video generation without text-video data.
\newblock 2022.
\newblock URL \url{https://arxiv.org/abs/2209.14792}.

\bibitem[Sohl-Dickstein et~al.(2015)Sohl-Dickstein, Weiss, Maheswaranathan, and
  Ganguli]{sohldickstein2015deepunsupervised}
Jascha Sohl-Dickstein, Eric Weiss, Niru Maheswaranathan, and Surya Ganguli.
\newblock Deep unsupervised learning using nonequilibrium thermodynamics.
\newblock In \emph{Proceedings of the 32nd International Conference on Machine
  Learning}, 2015.
\newblock URL \url{https://proceedings.mlr.press/v37/sohl-dickstein15.html}.

\bibitem[Song et~al.(2021)Song, Meng, and Ermon]{song2021ddim}
Jiaming Song, Chenlin Meng, and Stefano Ermon.
\newblock Denoising diffusion implicit models.
\newblock In \emph{International Conference on Learning Representations}, 2021.

\bibitem[Song and Ermon(2019)]{song2019generative}
Yang Song and Stefano Ermon.
\newblock Generative modeling by estimating gradients of the data distribution.
\newblock In \emph{Advances in Neural Information Processing Systems},
  volume~32, 2019.

\bibitem[Song et~al.(2020)Song, Sohl-Dickstein, Kingma, Kumar, Ermon, and
  Poole]{song2020scoreSDE}
Yang Song, Jascha Sohl-Dickstein, Diederik~P Kingma, Abhishek Kumar, Stefano
  Ermon, and Ben Poole.
\newblock Score-based generative modeling through stochastic differential
  equations.
\newblock In \emph{International Conference on Learning Representations}, 2020.

\bibitem[Thuemmel et~al.(2023)Thuemmel, Karlbauer, Otte, Zarfl, Martius,
  Ludwig, Scholten, Friedrich, Wulfmeyer, Goswami, and
  Butz]{thuemmel2023inductive}
Jannik Thuemmel, Matthias Karlbauer, Sebastian Otte, Christiane Zarfl, Georg
  Martius, Nicole Ludwig, Thomas Scholten, Ulrich Friedrich, Volker Wulfmeyer,
  Bedartha Goswami, and Martin~V. Butz.
\newblock Inductive biases in deep learning models for weather prediction.
\newblock 2023.

\bibitem[Tran et~al.(2023)Tran, Mathews, Xie, and Ong]{tran2023factorized}
Alasdair Tran, Alexander Mathews, Lexing Xie, and Cheng~Soon Ong.
\newblock Factorized fourier neural operators.
\newblock In \emph{The Eleventh International Conference on Learning
  Representations}, 2023.
\newblock URL \url{https://openreview.net/forum?id=tmIiMPl4IPa}.

\bibitem[Tu et~al.(2022)Tu, Roberts, Prasad, Nayak, Jain, Sala, Ramakrishnan,
  Talwalkar, Neiswanger, and White]{tu2022ccaiautoml}
Renbo Tu, Nicholas Roberts, Vishak Prasad, Sibasis Nayak, Paarth Jain, Frederic
  Sala, Ganesh Ramakrishnan, Ameet Talwalkar, Willie Neiswanger, and Colin
  White.
\newblock Automl for climate change: A call to action.
\newblock 2022.
\newblock URL \url{https://arxiv.org/abs/2210.03324}.

\bibitem[Voleti et~al.(2022)Voleti, Jolicoeur-Martineau, and
  Pal]{voleti2022mcvd}
Vikram Voleti, Alexia Jolicoeur-Martineau, and Christopher Pal.
\newblock {MCVD}: Masked conditional video diffusion for prediction,
  generation, and interpolation.
\newblock \emph{Advances in Neural Information Processing Systems}, 2022.
\newblock URL \url{https://arxiv.org/abs/2205.09853}.

\bibitem[Vos et~al.(2020)Vos, Gritzman, Makhanya, Mashinini, and
  Watson]{vos2021long}
Etienne~E Vos, Ashley Gritzman, Sibusisiwe Makhanya, Thabang Mashinini, and
  Campbell~D Watson.
\newblock Long-range seasonal forecasting of 2m-temperature with machine
  learning.
\newblock \emph{NeurIPS, Tackling Climate Change with Machine Learning}, 2020.

\bibitem[Wan et~al.(2023)Wan, Zepeda-Nunez, Boral, and Sha]{wan2023evolve}
Zhong~Yi Wan, Leonardo Zepeda-Nunez, Anudhyan Boral, and Fei Sha.
\newblock Evolve smoothly, fit consistently: Learning smooth latent dynamics
  for advection-dominated systems.
\newblock In \emph{The Eleventh International Conference on Learning
  Representations}, 2023.
\newblock URL \url{https://openreview.net/forum?id=Z4s73sJYQM}.

\bibitem[Wang et~al.(2020)Wang, Kashinath, Mustafa, Albert, and
  Yu]{wang2020tfnet}
Rui Wang, Karthik Kashinath, Mustafa Mustafa, Adrian Albert, and Rose Yu.
\newblock Towards physics-informed deep learning for turbulent flow prediction.
\newblock \emph{Proceedings of the 26th ACM SIGKDD International Conference on
  Knowledge Discovery \& Data Mining}, 2020.

\bibitem[Wang et~al.(2022)Wang, Walters, and Yu]{wang2022metalearning}
Rui Wang, Robin Walters, and Rose Yu.
\newblock Meta-learning dynamics forecasting using task inference.
\newblock In \emph{Advances in Neural Information Processing Systems
  (NeurIPS)}, 2022.

\bibitem[Westerling et~al.(2003)Westerling, Gershunov, Brown, Cayan, and
  Dettinger]{westerling2003climate}
Anthony~L Westerling, Alexander Gershunov, Timothy~J Brown, Daniel~R Cayan, and
  Michael~D Dettinger.
\newblock Climate and wildfire in the western united states.
\newblock \emph{Bulletin of the American Meteorological Society}, 84\penalty0
  (5):\penalty0 595--604, 2003.

\bibitem[Weyn et~al.(2019)Weyn, Durran, and Caruana]{weyn2019canmachines}
Jonathan~A. Weyn, Dale~R. Durran, and Rich Caruana.
\newblock Can machines learn to predict weather? using deep learning to predict
  gridded 500-hpa geopotential height from historical weather data.
\newblock \emph{Journal of Advances in Modeling Earth Systems}, 11\penalty0
  (8):\penalty0 2680--2693, 2019.
\newblock \doi{https://doi.org/10.1029/2019MS001705}.
\newblock URL
  \url{https://agupubs.onlinelibrary.wiley.com/doi/abs/10.1029/2019MS001705}.

\bibitem[Wu et~al.(2021)Wu, Gao, Chinazzi, Xiong, Vespignani, Ma, and
  Yu]{wu2021quantifying}
Dongxia Wu, Liyao Gao, Matteo Chinazzi, Xinyue Xiong, Alessandro Vespignani,
  Yi-An Ma, and Rose Yu.
\newblock Quantifying uncertainty in deep spatiotemporal forecasting.
\newblock In \emph{Proceedings of the 27th ACM SIGKDD Conference on Knowledge
  Discovery \& Data Mining}, pages 1841--1851, 2021.

\bibitem[Xu et~al.(2023)Xu, Liu, Tian, Tong, Tegmark, and
  Jaakkola]{xu2023pfgmplusplus}
Yilun Xu, Ziming Liu, Yonglong Tian, Shangyuan Tong, Max Tegmark, and Tommi
  Jaakkola.
\newblock Pfgm++: Unlocking the potential of physics-inspired generative
  models.
\newblock 2023.
\newblock URL \url{https://arxiv.org/abs/2302.04265}.

\bibitem[Yang et~al.(2022{\natexlab{a}})Yang, Zhang, Song, Hong, Xu, Zhao,
  Shao, Zhang, Cui, and Yang]{ling2022diffusionsurvey}
Ling Yang, Zhilong Zhang, Yang Song, Shenda Hong, Runsheng Xu, Yue Zhao,
  Yingxia Shao, Wentao Zhang, Bin Cui, and Ming-Hsuan Yang.
\newblock Diffusion models: A comprehensive survey of methods and applications,
  2022{\natexlab{a}}.
\newblock URL \url{https://arxiv.org/abs/2209.00796}.

\bibitem[Yang et~al.(2022{\natexlab{b}})Yang, Srivastava, and
  Mandt]{yang2022diffusion}
Ruihan Yang, Prakhar Srivastava, and Stephan Mandt.
\newblock Diffusion probabilistic modeling for video generation.
\newblock \emph{arXiv preprint arXiv:2203.09481}, 2022{\natexlab{b}}.

\end{thebibliography}
\bibliographystyle{plainnat}

\appendix
\newpage

\tableofcontents
\newpage

\section*{Appendix}

\section{Glossary}
\label{sec:gloss}

The glossary is given in Table~\ref{tab:glossary} below.
\begin{table*}[h]
\centering
\small
\begin{tabular}{l l}
\toprule
Symbol & Used for \\
\midrule
$t$ & Indexer of temporal data snapshots \\
$h$ & Training horizon, i.e. the number of timesteps a method learns to forecast \\
$H$ & Evaluation horizon. A model with horizon $h$, is applied autoregressively $\lceil \frac{H}{h} \rceil$-times during evaluation \\
$N$ & Total number of diffusion steps \\
$n$ & Indexer of the diffusion step, where $0\leq n \leq N-1$ \\
$\dataspace$ & Distribution from which the data snapshots are sampled from \\
$\xt[t+i]$ & Data point at timestep $t+i$ \\
$\xhatt[t+i]$ & Predicted data point at timestep $t+i$ \\
$\xt[t:t+h]$ & Shorthand for the sequence of data points $\xt[t], \xt[t+1], \dots, \xt[t+h]$ \\
$\xt[t]$ & Initial conditions, i.e. input snapshot for the forecasting model \\
$\xt[t+1:t+h]$ & Sequence of target snapshots that the forecasting model predicts given $\xt[t]$ \\
$\xk[n]$ & Data point at diffusion step $n$ \\
$\nninterpolate$ & Stochastic temporal interpolation neural network; Trained in the first-stage of \method \\
$\nn$ & Forecasting neural network; Trained in the second-stage of \method \\
$\interpolationschedule$ & Interpolation schedule for \method\ that couples every diffusion step, $n$, to a timestep $i_n$ \\
$i_n$ & Maps diffusion step $n$ to timestep $i_n$, where $0 = i_0 < i_n < i_m < h$ for $0 < n < m \leq N-1$ \\
$k$ & Number of non-integer auxiliary diffusion steps for \method. If $k=0$, then $\interpolationschedule=[i_n]_{n=0}^{N-1}=[j]_{j=0}^{h-1}$ \\
$\ddpmforwardprocess$ & Forward diffusion process operator, e.g. Gaussian noise, of a conventional diffusion model \\
$\nnddpm$ & Denoising neural network of a conventional diffusion model \\
$\uniform{a}{b}$ & Describes the uniform distribution over the set $\{a, a+1, \cdots, b\}$ \\
\toprule
\end{tabular}
\caption{
	Glossary of variables and symbols used in this paper.
 Note that for \method, the reverse process starts at $n=0$ and progresses forward (together with the corresponding interpolation timestep, $i_n$) as $n\rightarrow N$ ending with the final output $\xk[N]$.
 For conventional diffusion models, this notation is reversed and the final output of the reverse diffusion process corresponds to $\xk[0]$.
}
\label{tab:glossary}
\end{table*}

\section{Methodology}
\label{sec:methods-appendix}
\subsection{\method}

\begin{algorithm}[H] %
\caption{\method, Two-stage Training, all details}
\label{alg:dyffusion-all-details}
\begin{algorithmic}
    \STATE
        {\bfseries Input:} 
        networks $\nn, \nninterpolate$, norm $\norm{\cdot}$, horizon $h$, schedule $[i_n]_{i=0}^{N-1}$, loss coefficients $\lambda_1, \lambda_2$, forecaster conditioning $\fcondition$ %
    \STATE
        \emph{Stage 1:} Train interpolator network, $\nninterpolate$ %
    \STATE
        \quad 1.
            Sample $i \sim \interpolationstepsspacelong$
        \\\quad 2.
            Sample $\xt[\inputtimesteps], \xt[t+i], \xt[t+h] \sim \dataspace$
        \\\quad 3.
            Optimize $\min_\phi \norm{\nninterpolatefunc{}{}{} - \xt[t+i]}^2$
    \STATE
    \STATE
        \emph{Stage 2:} Train forecaster network (diffusion model backbone), $\nn$ %
    \STATE
        \quad 1.
            Freeze $\nninterpolate$ and enable inference stochasticity (e.g. dropout)
        \\\quad 2.
        Sample $n \sim \diffusionstepsspacelong$ and $\xt[\inputtimesteps], \xt[t+h]\sim \dataspace$
        \\\quad 3.
        $\xhatt[t+h]^{(1)} \gets \nn(\nninterpolatefunc{}{}{i_n}, \nntime[n], \fcondition(\xt, n))$ 
        \\\quad 4.
        $\xhatt[t+h]^{(2)} \gets \nn(\nninterpolatefunc{}{\xhatt[t+h]^{(1)}}{i_{n+1}}, \nntime[n+1], \fcondition(\xt, n+1))\;$ \texttt{if} $\;n\leq N-2\;$ \texttt{else} $\;\xt[t+h]$
        \\\quad 5.
            Optimize $\min_\theta \lambda_1\norm{\xhatt[t+h]^{(1)} - \xt[t+h]}^2 + \lambda_2 \norm{\xhatt[t+h]^{(2)} - \xt[t+h]}^2$
\end{algorithmic}
\end{algorithm}

In Alg.~\ref{alg:dyffusion-all-details} we give the full training algorithm for \method.
The only differences to the truncated training algorithm in the main text (Alg.~\ref{alg:dyffusion}) are that we also include:
\begin{itemize}
    \item 
    One-step look-ahead loss term trick, $\lambda_2\norm{\nn(\nninterpolatefunc{}{\xhatt[t+h]}{i_{n+1}}, \nntime[n+1]) - \xt[t+h]}^2$, whenever $n+1< N$.
    Essentially, the look-ahead loss term simulates one step of the sampling process (Alg.~\ref{alg:sa2}) and backpropagates through it, so that the network is trained with an objective that is closer to and partially mimics the iterative sampling process.
    In all experiments, we use $\lambda_1=\lambda_2=\frac{1}{2}$. We believe that it would be interesting to explore alternative approaches to this loss term, such as fine-tuning the interpolator network in the second-stage too. 
    \item 
    Optional conditioning of the forecaster network on a noised or clean version of the initial conditions, $\xt$. 
    We explore three different approaches: 1) No conditioning at all, i.e. $\fcondition(\cdot, \cdot) = \texttt{null}$;
    2) Clean initial conditions, i.e. $\fcondition(\xt, \cdot) = \xt$;
    3) Noised initial conditions, where we sample random noise $\epsilon \sim \mathcal{N}(\textbf{0}, \textbf{I})$ of the same shape of $\xt$, and then define $\fcondition(\xt, n) = \frac{n}{N-1}\xt + (1-\frac{n}{N-1})\epsilon$. 
    The second variant is analogous to the conditioning of conventional diffusion models, which \method\ does not necessarily require since the initial conditions indirectly influence the main input of the forecaster net (i.e. $\xhatt[t+i_n]=\nninterpolatefunc{}{}{i_n}$).
    The motivation for the third variant stems from trying to give the forecaster net a more explicit conditioning on the initial conditions when the reverse process goes farther away from $t$ (i.e. as $n \rightarrow N$ and $\xhatt[t+i_n] \rightarrow \xt[t+h])$. An ablation for the choice of $\fcondition$ is reported in \ref{sec:fcondition}.
\end{itemize}

\subsection{Connection to standard diffusion models}
\label{sec:dyffusion-vs-normal-diffusion}
Our work builds upon cold diffusion~\cite{bansal2022cold}, which ``paves the way for generalized diffusion models that invert arbitrary processes''. The cold sampling algorithm, which we use to sample from \method\ (see Alg.~\ref{alg:sa2}), is a generalization of DDIM sampling for ``generalized diffusion models'' (see Appendix A.6 of~\cite{bansal2022cold}) and a key ingredient to make \method\ work (see ~\ref{sec:cold-vs-naive-ablation}). Our proposed forward and reverse processes are specifically designed to fall under this framework.
Given this context, it becomes clear that \method\ is a generative model for forecasting that falls in the category of ``generalized diffusion models''.

\paragraph{Training objectives.}
To see the similarity between standard diffusion models and \method\ in terms of training objectives, we write out the corresponding equations Eq.~\ref{eq:diffusionmodels} and Eq.~\ref{eq:forecaster}. For a standard ``generalized'' diffusion model, we can rewrite the main objective inside Eq.~\ref{eq:diffusionmodels} as follows:
\begin{align}
    ||R_\theta(D(\xk[0], n), \xt, n) - \xk[0]||^2 = ||R_\theta(\xk[n], \xt, n) - \xk[0]||^2,
\end{align}
where $\xk[0] = \xt[t+1:t+h]$, and $\xk[n]$ is a noisy version of $\xk[0]$ (its level of corruption increases with $n$).

In the context of \method's forecasting network, we can express its training objective (Eq.~\ref{eq:forecaster}) in the following manner:
\begin{align}
    ||\nn(\nninterpolate(\xt[t], \xt[t+h], {i_n}), {i_n}) - \xt[t+h]||^2 
&= ||\nn(\nninterpolate(\xt[t], \xk[N], {i_n}), {i_n}) - \xk[N]||^2 \\
&= ||\nn(\xk[n], {i_n}) - \xk[N]||^2,
\end{align}
where $\xk[N] = \xt[t+h]$ and $\xk[n] \approx \xt[t+i_n]$ is now a stepped backward in time version of $\xk[N]$. Note that the diffusion step indexing (superscript $n$) for \method\ is reversed so that it aligns with the temporal indexing (subscript $t$), such that e.g. $\xk[n]  \approx \xt[t+i_n]$ temporally precedes $\xk[n+1]  \approx \xt[t+i_{n+1}]$.
Accounting for the reversed indexing of the diffusion steps, it is easy to see the equivalence of the denoising network, $\nnddpm$, in a conventional diffusion model, and its forecasting network counterpart, $\nn$, in \method. 
The reason why \method's forecaster network does not necessarily require the initial conditions, $\xt$, as additional input is because the diffusion state, $\xk[n]$, depends on them.
The central difference is the design and semantic meaning of the diffusion processes accompanying them.
We visualize this in the diagrams in Fig.~\ref{fig:diagram}.

\section{Diffusion process as ODE}
\label{appendix:ode}

In this section, we treat $\x$ as a continuous function of time, i.e. $\x: \R \to \R^n$, $s \mapsto \x(s)$. Here $s$ denotes the time variable (since $t$ is already used to define the index of temporal data snapshots). 

During prediction, $\xt[\inputtimesteps]$ is given, $\phi$ is fixed, and $\nninterpolate$ only depends on $\xhatt[t+h]$ and $i$. Therefore, we simplify notations by writing $\nninterpolate\brackets{\xt[\inputtimesteps], \xhatt[t+h], i} =  \nninterpolatenosubscripts_{\phi, \xt[\inputtimesteps]}(\xhatt[t+h], i)$. We will further omit the subscripts $\phi$ and $\xt[\inputtimesteps]$.

\subsection{Cold Sampling from the Euler method}
\label{appendix:ode-cold-from-euler}
In this section, we show that Cold Sampling is an approximation of the Euler method for \eqref{eq:ode-main}.

The Euler method for integrating $\x$ is
\begin{align}
\label{eq:x-ode-euler}
    \xt[s+\Delta s] = \xt[s] + \Delta s \frac{d\nninterpolatenosubscripts(\nn(\xt[s], s), s)}{ds}
\end{align}
for a small $\Delta s$.
We do not have access to $\frac{d\nninterpolatenosubscripts(\nn(\xt[s], s), s)}{ds}$. However, since we know $\nn$ and $\nninterpolate$, we can approximate $\frac{d\nninterpolatenosubscripts(\nn(\xt[s], s), s)}{ds}$ by its first-order Taylor expansion around s:
\begin{align}
    \Delta s \frac{d\nninterpolatenosubscripts(\nn(\xt[s], s), s)}{ds} 
    \approx 
    \nninterpolatenosubscripts(\nn(\xt[s+\Delta s], s+\Delta s), s+\Delta s) - 
    \nninterpolatenosubscripts(\nn(\xt[s], s), s)
\end{align}

This step can also be interpreted as evaluating the integral in \eqref{eq:ode-main-integral} using the fundamental theorem of calculus. Then the Euler method becomes
\begin{align}
   \xt[s+\Delta s] = \xt[s] + 
   \nninterpolatenosubscripts(\nn(\xt[s+\Delta s], s+\Delta s), s+\Delta s) - 
    \nninterpolatenosubscripts(\nn(\xt[s], s), s)
\end{align}

Note that $\xt[s+\Delta s]$ on the right-hand side is unknown because it is the quantity we want to approximate in this step. A reasonable way to approximate $\nn(\xt[s+\Delta s], s+\Delta s)$ is to replace it by $\nn(\xt[s], s)$ because they both predict $\x(h)$ and use nearby points (assuming $\Delta s$ is small and $\x$ behaves nicely around $s$). The resulting update,
\begin{align}
    \xt[s+\Delta s] = \xt[s] + 
    \nninterpolatenosubscripts(\nn(\xt[s], s), s+\Delta s) - 
    \nninterpolatenosubscripts(\nn(\xt[s], s), s),
\end{align}
is exactly the Cold Sampling algorithm (Alg.~\ref{alg:sa2}). By formulating the diffusion process as an ODE, we have provided a new theoretical explanation for Cold Sampling.

Note that we made the approximation that $\nn(\xt[s+\Delta s], s+\Delta s) \approx \nn(\xt[s], s)$ to obtain the update rule (previous equation). The error introduced by this approximation is expected to be larger when $s$ is small. The intuition is as follows. When $s$ is small, the distance between $s$ and $t+h$ is large, and $\nn(\xt[s], s)$ has to make a prediction farther ahead. Predicting far into the future is generally harder than predicting the near future. Therefore, the prediction error $\nn(\xt[s], s) - \xt[t+h]$ is expected to be larger when $s$ is small. The larger uncertainty may lead to larger difference between $\nn(\xt[s+\Delta s], s+\Delta s)$ and $\nn(\xt[s], s)$.

To reduce the error brought by this approximation, it makes sense to sample more densely around the early part of the prediction window.

\subsection{Why is cold sampling better than naive sampling?}
\label{appendix:ode-cold-is-better}
\begin{wrapfigure}{r}{0.66\textwidth} %
\begin{minipage}{0.66\textwidth}
\vspace{-8mm}
\begin{algorithm}[H]
   \caption{Adapted Naive Sampling~\cite{bansal2022cold}}
   \label{alg:naive-sampling}
    \begin{algorithmic}[1]
    \STATE
        {\bfseries Input:} 
        Initial conditions $\xhatt[\inputtimesteps]:=\xt[\inputtimesteps]$, schedule $[i_n]_{i=0}^{N-1}$
    \FOR{\textit{n} $=0,1,\dots,N-1$}
    \STATE $\xhatt[t+h] \gets \nn(\xhatt[t+i_n], \nntime[n])$ 
    \STATE $\xhatt[t + i_{n+1}] = \nninterpolatefunc{}{\xhatt[t+h]}{i_{n+1}}
    \;$ \# difference w.r.t. Alg.~\ref{alg:sa2} %
    \ENDFOR
    \end{algorithmic}
\end{algorithm}
\vspace{-6mm}
\end{minipage}
\end{wrapfigure}
The cold sampling algorithm (adapted for \method\ in Alg.~\ref{alg:sa2}) was proposed as an improved version over so-called ``naive'' sampling in the original Cold Sampling paper~\cite{bansal2022cold} for generalized diffusion model forward processes. Both sampling algorithms can be directly used to sample from \method.
For completeness, we adapt the simpler, naive sampling algorithm to our notation in Alg.~\ref{alg:naive-sampling}, where the only difference to cold sampling (Alg.~\ref{alg:sa2}) lies in the missing error-correction terms in line 4 of the algorithms.
In our ablation experiments, we find that cold sampling indeed outperforms naive sampling by a large margin (see \emph{Naive sampling} row in Table~\ref{tab:sampling-ablation}), especially for the SST dataset where we use auxiliary diffusion steps.
We explain by analyzing the discretization errors in the two sampling algorithms. In cold sampling, the discretization error per step is bounded by a term proportional to the step size $\Delta s$. Naive sampling does not have this property. 

The true value of $\x$ at $s + \Delta s$ according to Equation \eqref{eq:ode-main-integral} is
\begin{align}
    \x(s+\Delta s) &= \x(s) + \int_s^{s+\Delta s} \frac{d\nninterpolate (\nn(\x, s), s)}{ds} \cr
    &= \x(s) + \nninterpolatenosubscripts(\nn(\x({s+\Delta s}), s+\Delta s), s+\Delta s) - \nninterpolatenosubscripts(\nn(\x(s), s), s).
\end{align}

Recall from Alg.~\ref{alg:sa2} that given $\x(s)$, cold sampling predicts $\x(s+\Delta s)$ as 
\begin{align}
    \hat{\x}(s+\Delta s) = \x(s) + 
    \nninterpolate(\xt[t-l:t], \nn(\x(s), s), s+\Delta s) - 
    \nninterpolate(\xt[t-l:t], \nn(\x(s), s), s).
\end{align}
The discretization error $e(\x)$ of one step of cold sampling is the difference between the exact and predicted $\x(s+\Delta s)$:
\begin{align}
    e(\x, \Delta s) 
    &= \x(s+\Delta s) - \hat{\x}(s+\Delta s) \cr 
    &= \nninterpolatenosubscripts(\nn(\x({s+\Delta s}), s+\Delta s), s+\Delta s) - \nninterpolatenosubscripts(\nn(\x({s}), s), s+\Delta s).
\end{align}
The following proposition states that $e(\x)$ is bounded by a term proportional to the step size $\Delta s$. 
\begin{proposition}
\label{prop:cold-sampling-error}
Assume that $\nn(\x(s), s)$ is Lipschitz in $s$. Assume also that $\nninterpolate(\xt[t+h], s)$ is Lipschitz in $\xt[t+h]$. The norm of the cold sampling discretization error, $||e(\x, \Delta s)||_2$, is bounded by $O(\Delta s)$.
\end{proposition}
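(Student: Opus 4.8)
The plan is to establish the bound by a short chain of the two Lipschitz hypotheses, applied in sequence. The crucial structural observation is that the two interpolator evaluations defining $e(\x, \Delta s)$ are fed the \emph{same} second argument $s+\Delta s$; they differ only in the first slot, namely the forecaster output $\nn(\x(s+\Delta s), s+\Delta s)$ versus $\nn(\x(s), s)$. So the first step is to peel off the interpolator using its Lipschitz continuity in the first argument, with constant $L_{\mathcal{I}}$:
\[
||e(\x, \Delta s)||_2 \leq L_{\mathcal{I}} \, ||\nn(\x(s+\Delta s), s+\Delta s) - \nn(\x(s), s)||_2.
\]

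The second step bounds the forecaster difference. Here I would read the hypothesis that $\nn(\x(s), s)$ is Lipschitz in $s$ as the statement that the composite map $g(s) := \nn(\x(s), s)$ is Lipschitz, a reading that already absorbs both the explicit dependence on $s$ and the implicit dependence through the trajectory $\x(s)$. Under this reading the forecaster difference is exactly $||g(s+\Delta s) - g(s)||_2 \leq L_F \, \Delta s$, and chaining the two estimates yields $||e(\x, \Delta s)||_2 \leq L_{\mathcal{I}} L_F \, \Delta s = O(\Delta s)$, which is the claim.

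The hard part is not the calculation, which is a two-line Lipschitz chain, but pinning down the precise meaning of the assumptions so that the argument is airtight. In particular, the forecaster hypothesis must be the one on the composite $g(s) = \nn(\x(s), s)$: a Lipschitz bound in $s$ for a \emph{fixed} first argument would not suffice, since the first argument itself moves from $\x(s)$ to $\x(s+\Delta s)$. An alternative, more explicit route is to split the forecaster difference into a term that varies only the time argument and a term that varies only the state argument, then invoke Lipschitz continuity of $\nn$ separately in each slot together with Lipschitz continuity of the trajectory $s \mapsto \x(s)$. This decomposition makes the dependence on the regularity of $\x$ transparent, at the cost of an extra assumption that $\x$ is Lipschitz on the compact window $(t, t+h]$, which itself follows once $\nn$ and $\nninterpolatenosubscripts$ are smooth enough that the right-hand side of \eqref{eq:ode-main} is bounded. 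I would present the composite-Lipschitz version as the clean statement and note this decomposition as the mechanism justifying it.
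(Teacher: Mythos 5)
Your proof is correct and is essentially the paper's own argument: the paper likewise applies the two Lipschitz hypotheses in sequence (reading the forecaster assumption as Lipschitzness of the composite $s \mapsto \nn(\x(s), s)$, exactly as you do) to obtain the bound $L_1 L_2 \, \Delta s$. Your remark that a fixed-first-argument reading would not suffice is a correct and useful clarification of the hypothesis the paper implicitly uses.
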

\begin{proof}
    The proof relied on applying definitions of Lipschitz functions twice. 
    Let $L_1$ be the Lipschitz constant for $\nn(\x(s), s)$ in $s$. Let $L_2$ be the Lipschitz constant for $\nninterpolatenosubscripts(\x, s)$ in $\x$.
    Since $\nn(\x(s), s)$ is Lipschitz in $s$, we have $||\nn(\x({s+\Delta s}), s+\Delta s) - \nn(\x({s}), s)||_2 \leq L_1 \Delta s$. Since $\nninterpolatenosubscripts(\x, s)$ is Lipschitz in $\x$, we have $||\nninterpolatenosubscripts(\nn(\x({s+\Delta s}), s+\Delta s), s+\Delta s) - \nninterpolatenosubscripts(\nn(\x({s}), s), s+\Delta s)||_2 \leq L_2 L_1 \Delta s$. Therefore $||e(\x)|| \leq L_2 L_1 \Delta s$, which means the discretization error is bounded by a first-order term of the step size. 
\end{proof}

Under the same Lipschitz assumptions, the discretization error of the naive sampling is not guaranteed to be in the first order of step size. In naive sampling, the predicted $\x$ at time $s+\Delta s$ is 
\begin{align}
    \hat{\x}(s+\Delta s) = \nninterpolatenosubscripts(\nn(\x({s}), s), s+\Delta s).
\end{align}
The discretization error of one step of naive sampling is $\x(s+\Delta s)$:
\begin{align}
    e(\x, \Delta s) 
    =& \x(s+\Delta s) - \hat{\x}(s+\Delta s) \cr 
    =& \nninterpolatenosubscripts(\nn(\x({s+\Delta s}), s+\Delta s), s+\Delta s) - \nninterpolatenosubscripts(\nn(\x({s}), s), s+\Delta s) \cr
    &+ \x(s)  - \nninterpolatenosubscripts(\nn(\x(s), s), s).
\end{align}
Note that the first two terms are the same as the discretization error in cold sampling. However, the last two terms are not bounded by first-order terms of $\Delta s$. Hence, naive sampling can have larger discretization errors.

\section{Experiments}
\label{sec:experimentsappendix}

\subsection{Datasets}
\begin{wrapfigure}{r}{0.36\textwidth}
\vspace{-4mm}
    \centering
    \includegraphics[width=0.95\linewidth]{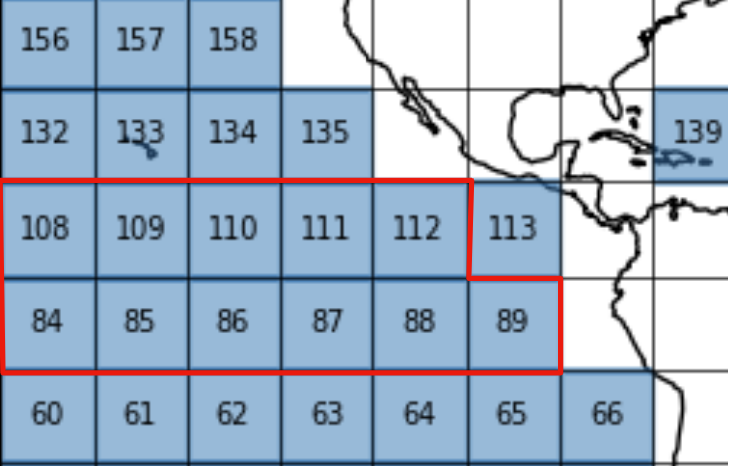}
    \caption{Visualization of the SST dataset that we created. It divides the globe into $60\times60$ latitude $\times$ longitude grid tiles. We only use the subset delineated in red, i.e. boxes 84-89 and 108-112.}
    \label{fig:oisstv2_boxes}
\vspace{-4mm}
\end{wrapfigure}

\subsubsection{SST data preprocessing}
\label{sec:sstpreprocessing}
We create a new \emph{sea surface temperatures} (SST) dataset based on NOAA OI SST V2~\cite{huang2021oisstv2}, which comes at a daily time scale.
These data are available from 1982 to the present at a resolution of $1/4^\circ$ degrees. 
For training, we use the years 1982-2018, for validation 2019, and for testing 2020.
We have preprocessed the NOAA OI SST V2 dataset as follows:
\begin{enumerate}
    \item First, the globe is divided into $60\times60$ latitude $\times$ longitude grid tiles,
    \item all tiles with less than $95\%$ of ocean cover are filtered out,
    \item standardize the raw SSTs using daily means and standard deviations (computed on the training set only, i.e. 1982-2018), 
    \item replace continental NaNs with zeroes (after standardization), and
    \item we subsample 11 grid tiles (covering mostly the eastern tropical Pacific, as shown in Fig. \ref{fig:oisstv2_boxes}).
\end{enumerate}

\subsubsection{Navier-Stokes and spring mesh datasets}
We refer to the physical systems benchmark paper for more details regarding the Navier-Stokes and spring mesh benchmark datasets~\cite{otness21nnbenchmark}. We follow the same evaluation splits and procedure (only complemented by the probabilistic skill metrics that we employ too). We always use the full training set provided by~\cite{otness21nnbenchmark}. We use the Navier-Stokes dataset with four obstacles.

\subsection{Implementation Details}
\label{sec:implementationdetails}
The set of hyperparameters that we use for each dataset, such as the learning rate and maximum number of epochs, can be found in Table~\ref{tab:hyperparameters}. For all experiments, we use a floating point precision of $16$ and do not use a learning rate scheduler.
All diffusion models, including \method, are trained with the L1 loss, while all barebone UNet/CNN networks are trained on the L2 loss.
We use three different dropout rates for the SST UNet: 1) before the query-key-value projection of each attention layer, $dr_{at}$; 2)
After the first sub-block of each ResNet block, $dr_{bl_1}$; 3)
After the second sub-block of each ResNet block, $dr_{bl_2}$, where the first ResNet sub-block consists of convolution $\rightarrow$ normalization $\rightarrow$ time-embedding scale-shift $\rightarrow$ activation function, and the second sub-block is the same but without the time-embedding scale-shift.
\begin{table}
\vspace{-4mm}
\caption{The hyperparameters used for each dataset. For the learning rates, we sweep over each value and report the best set of runs based on their validation CRPS computed on 50 samples. DYffusion $k$ refers to the number of artificial diffusion steps used, see Fig.~\ref{fig:schedules}.
For architectural details, see \ref{sec:architecturedetails}.}
\label{tab:hyperparameters}
\centering
\begin{tabular}{lccccc}
\multicolumn{4}{c}{\textbf{Hyperparameters for each dataset}} \\
\toprule
Hyperparameter               & SST  & Navier-Stokes & Spring Mesh \\
\midrule
Batch size                   & 64  & 32  & 64  \\
Accumulate gradient batches  & 4   & 2   &  1  \\
Max. Epochs                  & 50  & 200 & 300 \\
Gradient clipping (norm)     & 1.0 & 1.0 & 1.0 \\
Learning rate(s)             & $7e$-$4$, $3e$-$4$, $5e$-$5$, $1e$-$5$ & $7e$-$4$, $3e$-$4$ &  $4e$-$4$ \\
Weight decay                 & $1e$-$5$ & $1e$-$4$ & $1e$-$4$ \\
AdamW $\beta_1$              & 0.9 & 0.9 & 0.9 \\
AdamW $\beta_2$              & 0.99 & 0.99 & 0.99 \\
DYffusion $k$                & 25  & 0   & 0 \\
\bottomrule
\end{tabular}
\end{table}

\paragraph{Perturbation baseline}
We perturb the initial conditions, $\xt$, with small amounts of Gaussian noise $\epsilon\sim\mathcal{N}(0, \sigma_\epsilon \mathbf{I})$.
We found that $\sigma_\epsilon^*=0.05$ gave the lowest CRPS scores among all variances that we tried, $\sigma_\epsilon\in\{0.01, 0.03, 0.05, 0.07, 0.1, 0.15, 0.2\}$. We note that choosing larger variances results in better SSR scores, but significantly lower CRPS and MSE scores. Inference dropout was disabled for this baseline variant.

\paragraph{Dropout baseline}
For this baseline, we enable the barebone model's dropout during both training and inference.
For the SST dataset, similarly to the interpolator network of \method, we found that using high dropout rates results in better performance. An explanation could be that the SST UNet has more capacity than the other backbone architectures. Concretely, the following SST UNet dropout rates resulted in the best performance: $dr_{at}=dr_{bl_2}=0.6, dr_{bl_1}=0.3$. For Navier-Stokes and spring mesh, there is only one dropout hyperparameter, and the corresponding best model uses $0.2$ and $0.05$ as dropout rate, respectively (selected from a sweep over $\{0.05, 0.1, 0.15, 0.2, 0.25, 0.3\}$).

\paragraph{DDPM}
We found that the cosine (linear) noise schedule gives better results for the SST (Navier-Stokes) dataset, and always uses the ``predict noise'' objective.
For the SST dataset, the best performing DDPM is trained with 5 diffusion steps, while for Navier-Stokes it is trained with 500 steps.
For Navier-Stokes, we found that while a DDPM with 5 or 10 diffusion steps can give good validation scores (or even better ones than the 500-step DDPM), it ends up diverging at test time after a few autoregressive iterations when used to forecast full trajectories.

\paragraph{MCVD~\cite{voleti2022mcvd}}
We train MCVD with 1000 diffusion steps for all datasets, as we were not able to successfully train it with fewer diffusion steps.
We use a linear noise schedule (we found the cosine schedule to produce inferior results) using the ``predict noise'' objective.
Due to the inference runtime complexity of using 1000 diffusion steps, we only report one MCVD run in our main SST results.
Note that in none of the experiments we use the UNet-based diffusion model backbone originally used by MCVD. In preliminary experiments on the SST dataset, however, we found that wrapping MCVD around the SST UNet resulted in slightly improved scores over wrapping it around the original MCVD UNet.

\paragraph{\method}
For the SST dataset, we use 25 artificial diffusion steps (analogous to the schedule in green in Fig.~\ref{fig:schedules}), while for the Navier-Stokes and spring mesh datasets, we do not use any, i.e. $S=[j]_{j=0}^{h-1}$. Furthermore, we found that the refinement step of Alg.~\ref{alg:sa2} did not improve performance for the SST dataset so we did not use it there, whereas it did improve performance for Navier-Stokes and spring mesh (see \ref{sec:inference-dropout}).
As for the choice of the interpolator network, $\nninterpolate$, we conduct a sweep over the dropout rates for each dataset (analogous to the ``Dropout'' baseline). This is an important hyperparameter since we found that stochasticity in the interpolator is crucial for the overall performance of \method.
The interpolator network is selected based on the lowest validation CRPS. For the SST dataset, the selected $\nninterpolate$ uses $dr_{at}=dr_{bl_2}=0.6, dr_{bl_1}=0$. The dropout rates for Navier-Stokes and spring mesh are $0.15$ and $0.05$, respectively. Generally, we found that the optimal amount of dropout for any given dataset strongly correlates between the ``Dropout'' multi-step forecasting baseline and \method's interpolator network, $\nninterpolate$. Thus, for a new dataset or problem, it is a valid strategy to sweep over the dropout rate for just one of the two model types.
Motivated by the intuition that temporal interpolation is a simpler task than forecasting, the channel dimensionality of $\nninterpolate$ is only $32$ (instead of $64$) for the first downsampling block of the SST UNet. To feed the two inputs $\xt[t]$ and $\xt[t+h]$ into the interpolator, we simply concatenate these snapshots along the channel dimensions, effectively doubling the number of input channels.

\paragraph{Evaluation}
All experiments are usually run over three random seeds when compute constraints allow.
We use a 50-member ensemble for all reported results unless noted otherwise to compute the CRPS, mean squared error (MSE), and spread-skill ratio (SSR). 
The MSE is computed on the ensemble mean prediction. The SSR is defined as the ratio of the square root of the ensemble variance to the corresponding ensemble RMSE.
It serves as a measure of the reliability of the ensemble, where values smaller than 1 indicate underdispersion (i.e. the probabilistic forecast is overconfident in its forecasts), and larger values overdispersion~\cite{fortin2014ssr, garg2022weatherbenchprob}.
We use the implementation in the \texttt{xskillscore}\footnote{\url{https://xskillscore.readthedocs.io/}} Python package to compute the CRPS of the ensemble forecasts.  
The CRPS penalizes both the absolute skill as well as the sharpness of the ensemble of forecasts. It rewards small spread (i.e. sharpness) if the forecast is accurate, and measures the difference between the forecasted and observed cumulative distribution function (CDF). For a deterministic forecast, the CRPS reduces to the mean absolute error. 
For CRPS and MSE \emph{lower is better}, for SSR \emph{closer to $1$ is better}.

\subsection{Neural Architecture Details}
\label{sec:architecturedetails}
\paragraph{SST UNet}
For the SST dataset, we use a UNet implementation commonly used as backbone architecture of diffusion models\footnote{\footnotesize{\url{https://github.com/lucidrains/denoising-diffusion-pytorch/blob/main/denoising_diffusion_pytorch/denoising_diffusion_pytorch.py}}}.
The UNet for the SST dataset consists of three downsampling and three upsampling blocks. Each block consists of two convolutional residual blocks (ResNet blocks), followed by an attention layer, and a downsampling (or upsampling) module. 
Each ResNet block can be further divided into two sub-blocks. The first one consists of convolution $\rightarrow$ normalization $\rightarrow$ time-embedding scale-shift $\rightarrow$ activation function, and the second sub-block is the same but without the time-embedding scale-shift.
The downsampling module is a 2D convolution that halves the spatial size of the input (with a $4\times4$ kernel and stride$=2$). The upsampling module doubles the spatial size via nearest neighbor upsampling followed by a 2D convolution (with $3\times3$ kernel). 
At the end of each downsampling (upsampling) block the spatial size is halved (doubled) and the channel dimension is doubled (halved). 
We use $64$ channels for the initial downsampling block, which means that the channel dimensionalities are $64\rightarrow 128 \rightarrow 256$ and the spatial dimensions $(60, 60)\rightarrow(30, 30)\rightarrow(15,15)$ in the corresponding downsampling blocks (reversed for the upsampling blocks).
We use three different dropout rates for the SST UNet: 1) before the query-key-value projection of each attention layer, $dr_{at}$; 2)
After the first sub-block of each ResNet block, $dr_{bl_1}$; 3)
After the second sub-block of each ResNet block, $dr_{bl_2}$. 
For all models except the Dropout baseline and $\nninterpolate$ in \method, which use higher dropout rates, we use $dr_{at}=0.1, dr_{bl_1}=0, dr_{bl_2}=0.3$.

\paragraph{Navier-Stokes UNet and spring mesh CNN}
For the Navier-Stokes and spring mesh benchmark datasets from~\cite{otness21nnbenchmark}, we simply re-use their proposed UNet and CNN architecture for the respective dataset. The only modification is the integration of the SST UNet time embedding module into the architectures of \cite{otness21nnbenchmark}, as described below.

\paragraph{Time embedding module}
The time-embedding scale-shift, taken from the SST UNet, is a key component of all architectures since it enables them to condition on the diffusion step (for DDPM and MCVD) or the dynamical timestep (for the time-conditioned barebone models as well as for both $\nn$ and $ \nninterpolate$ in \method).
It is implemented by a sine and cosine-based featurization of the scalar diffusion step/dynamical timestep. These features are projected by a linear layer, followed by a GeLU activation, and another linear layer, which results in a ``time embedding''.
Then, separately for each convolutional (or ResNet) block of the neural architecture the ``time embedding'' is further processed by a SiLU activation and another linear layer whose output is interpreted as two vectors that are used to scale and shift the block's inputs. In all architectures, the scale-shift operation is performed after the convolution and normalization layers, but before the activation function and dropout layer. 

\begin{table}[t]
  \caption{Navier-Stokes results, including on the out-of-distribution (OOD) test set. Left columns are same as in Table~\ref{tab:sst-navier-stokes-results}. As observed by Fig. 12 in~\cite{otness21nnbenchmark}, the differences across main vs OOD test set are minor.
    }
    \label{tab:navier-stokes-results-appendix}
    \centering
    \resizebox{\textwidth}{!}{
  \begin{tabular}{lcccccc}
    \toprule
    \multirow{2}{*}{\textbf{Method}} & \multicolumn{3}{c}{\textbf{Test}} & \multicolumn{3}{c}{\textbf{Out of Distribution}} \\
    \cmidrule(lr){2-4} \cmidrule(lr){5-7}
    & CRPS & MSE & SSR & CRPS & MSE & SSR \\
    \midrule
    Dropout & 
        \secondbest{0.078 {\stdsize$\pm$ 0.001}} & \secondbest{0.027 {\stdsize$\pm$ 0.001}} & \secondbest{0.715 {\stdsize$\pm$ 0.005}} &
        \secondbest{0.078 {\stdsize$\pm$ 0.001}} & \secondbest{0.027 {\stdsize$\pm$ 0.001}} & \secondbest{0.715 {\stdsize$\pm$ 0.006}}
        \\
    DDPM & 
         0.180 {\stdsize$\pm$ 0.004} & 0.105 {\stdsize$\pm$ 0.010} & 0.573 {\stdsize$\pm$ 0.001} &
         0.189 {\stdsize$\pm$ 0.012} & 0.113 {\stdsize$\pm$ 0.013} & 0.555 {\stdsize$\pm$ 0.023}
         \\
    MCVD & 
         0.154 {\stdsize$\pm$ 0.043} & 0.070 {\stdsize$\pm$ 0.033} & 0.524 {\stdsize$\pm$ 0.064} &
         0.170 {\stdsize$\pm$ 0.046} & 0.082 {\stdsize$\pm$ 0.036} & 0.489 {\stdsize$\pm$ 0.061}
         \\
    \method  & 
        \best{0.067 {\stdsize$\pm$ 0.003}} & \best{0.022 {\stdsize$\pm$ 0.002}} & \best{0.877 {\stdsize$\pm$ 0.006}} & 
        \best{0.069 {\stdsize$\pm$ 0.002}} & \best{0.023 {\stdsize$\pm$ 0.002}} & \best{0.882 {\stdsize$\pm$ 0.002}}
        \\
    \bottomrule
  \end{tabular}
  }
\end{table}
\subsection{Additional results}
\subsubsection{Navier-Stokes out-of-distribution}
\label{sec:appendix-navier-stokes-ood}
In Table~\ref{tab:navier-stokes-results-appendix} we show the out-of-distribution test results for the Navier-Stokes datasets (analogously to Table~\ref{tab:spring-mesh-results} for the spring mesh dataset). As mentioned in the main text and observed by~\cite{otness21nnbenchmark}, the differences compared to the main test set are marginal.

\subsubsection{Benchmark against deterministic models}
\label{sec:benchmark-deterministic}
\begin{figure}
  \centering
  \begin{subfigure}[b]{0.49\textwidth}
    \centering
    \includegraphics[width=\textwidth]{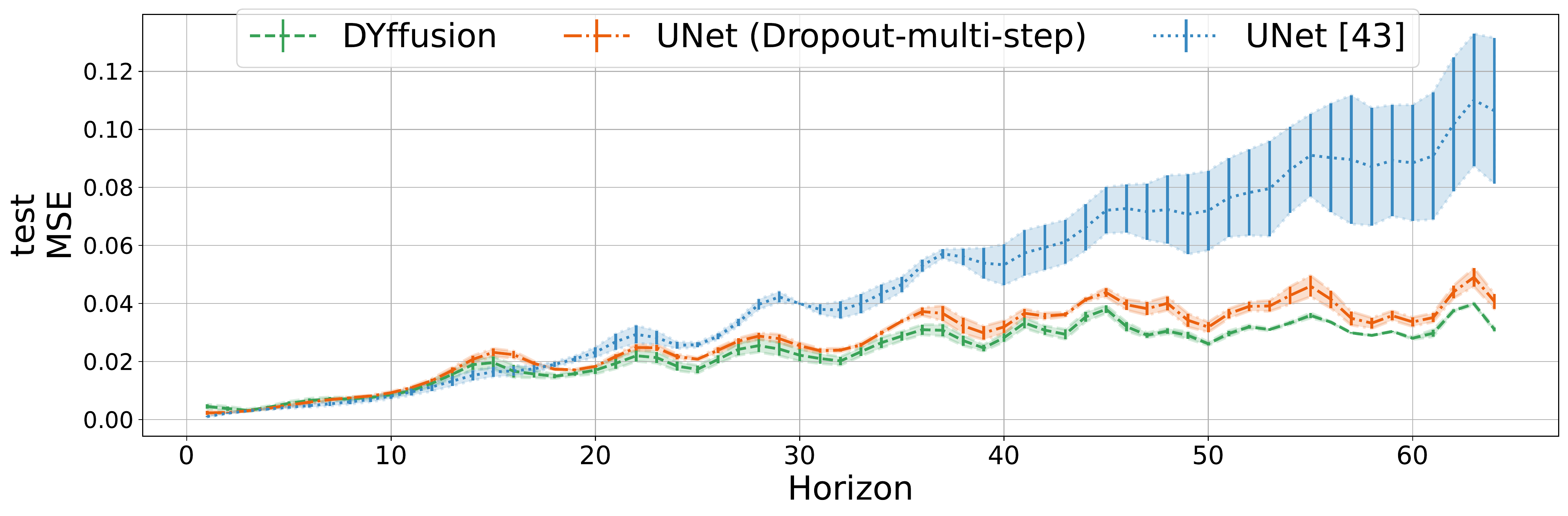}
    \caption{Navier-Stokes}
    \label{fig:time-vs-mse-navier-stokes}
  \end{subfigure}%
  \begin{subfigure}[b]{0.49\textwidth}
    \centering
    \includegraphics[width=\textwidth]{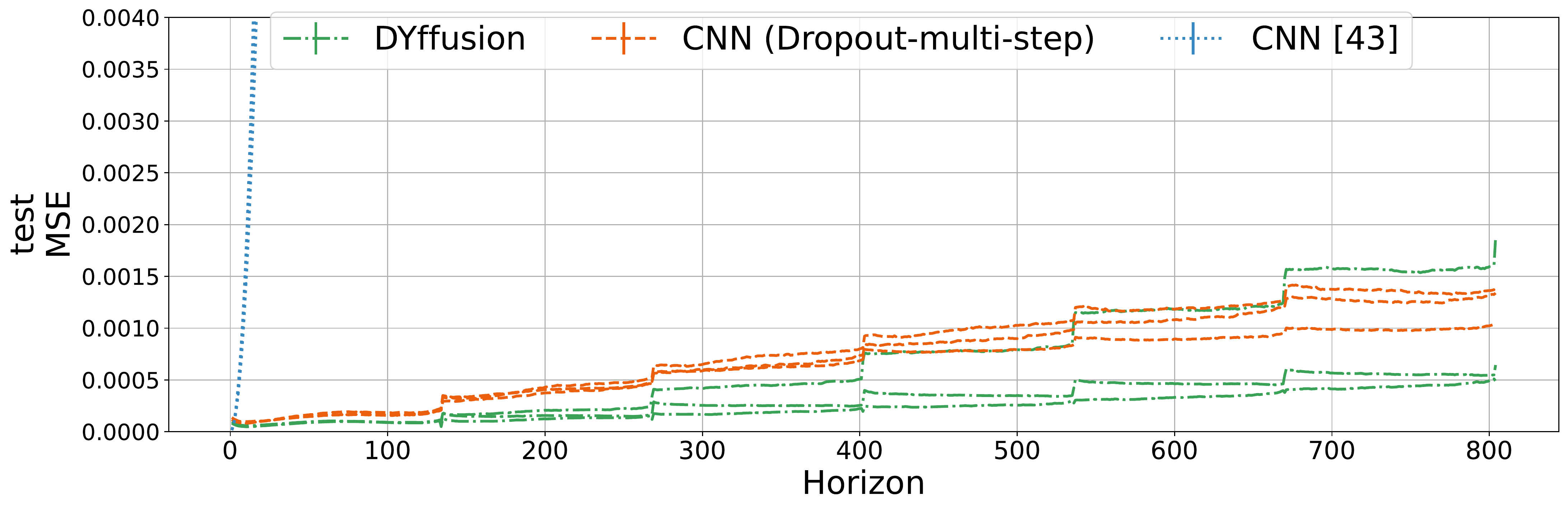}
    \caption{Spring Mesh}
    \label{fig:time-vs-mse-spring-mesh}
  \end{subfigure}
  \caption{Comparison against single-step deterministic baselines from \cite{otness21nnbenchmark}. We plot the MSE as a function of the rollout step (predicted horizon). It is clear that both our stochastic multi-step baseline, \emph{Dropout-multi-step}, and \method\ outperform the UNet (CNN) models from \cite{otness21nnbenchmark} on the Navier-Stokes (spring mesh) datasets of~\cite{otness21nnbenchmark}. As reported in \cite{otness21nnbenchmark}, the single-step CNN diverges after a few autoregressive steps on the spring mesh dataset. In addition, \method\ performs especially well on long-range forecasts relative to the baselines. For spring mesh we plot the three forecasted rollout samples of each model separately due to higher variance between samples.
  }
  \label{fig:time-vs-mse}
\end{figure}

In this set of experiments, we directly compare our method against the deterministic single-step forecasting baselines presented in the benchmark dataset for the Navier-Stokes and spring mesh experiments~\cite{otness21nnbenchmark} in terms of MSE. It is important to emphasize that our method is specifically designed to generate ensemble-based probabilistic forecasts, which is why our baselines and evaluation procedure focus on the probabilistic forecasting setting. 
Besides, MSE can only be part of the evaluation of a probabilistic forecasting model, since it does not capture the skill of the forecasted distribution like the CRPS or SSR metrics do.
In Fig~\ref{fig:time-vs-mse} it is visible that the baselines from~\cite{otness21nnbenchmark} either degrade or diverge during the inference rollouts for the Navier-Stokes and spring mesh test evaluations, respectively. Both our method as well as our multi-step Dropout baseline outperform~\cite{otness21nnbenchmark}, especially in the long-range forecasting regime. This is likely because our models are trained to forecast multiple timesteps, while the models from~\cite{otness21nnbenchmark} only learn to forecast the next timestep. As a result, the training objective significantly deviates from the evaluation procedure. The struggle with long-term stability and long-range forecasts by the benchmark datasets baselines was already noted by the dataset paper itself~\cite{otness21nnbenchmark}.
Note that architecturally all these models are almost identical, using the neural backbones proposed in~\cite{otness21nnbenchmark}.

\subsection{Ablations}
\label{sec:ablations}
\begin{table}[t]
  \caption{\method\ ablation. We change one component at a time in \method, starting with all components enabled (first row). For SST, we perform the ablation only on a subset of the test dataset (box 88 in Fig.~\ref{fig:oisstv2_boxes}). For the Navier-Stokes and spring mesh datasets, we use the full test sets.
  The second to fifth rows are ablations at sampling/inference time:
  \emph{No refinement} refers to not using the refinement step in line 6 of Alg.~\ref{alg:sa2}.
  \emph{No $\nninterpolate$ dropout} refers to disabling the inference dropout of the interpolator network, $\nninterpolate$.
  \emph{No $\nninterpolate$ dropout~\& $\sigma_\epsilon$} refers to disabling the inference dropout of $\nninterpolate$ and perturbing the inputs by $\sigma_\epsilon=0.05$ (as for the Perturbation baseline).
  \emph{Naive sampling} refers to swapping cold sampling (Alg.~\ref{alg:sa2}) with the naive sampling algorithm from~\cite{bansal2022cold}.
  The bottom three rows apply to both training and inference and ablate the choice of the forecaster conditioning $\fcondition(\xt, n)$, where for the last row $\alpha_n= \frac{n}{N-1}$. Here, the best choice varies across datasets.
  }
 \label{tab:sampling-ablation}
  \centering
  \resizebox{\textwidth}{!}{
  \begin{tabular}{lccccccccc}
    \toprule
    \multirow{1}{*}{} & %
        \multicolumn{3}{c}{\textbf{SST}} & 
        \multicolumn{3}{c}{\textbf{Navier-Stokes}} &
        \multicolumn{3}{c}{\textbf{Spring Mesh}} \\
    \cmidrule(lr){2-4} \cmidrule(lr){5-7} \cmidrule(lr){8-10}
    & CRPS & MSE & SSR & CRPS & MSE & SSR & CRPS & MSE & SSR \\
    \midrule
    Base & 
        0.181 {\stdsize$\pm$ 0.002} & 0.111 {\stdsize$\pm$ 0.002} & 1.04 {\stdsize$\pm$ 0.01} & 
        0.067 {\stdsize$\pm$ 0.003} & 0.022 {\stdsize$\pm$ 0.002} & 0.88 {\stdsize$\pm$ 0.01} & 
        0.0103 {\stdsize$\pm$ 0.0025} & 4.2e-4 {\stdsize$\pm$ 2.1e-4} & 1.13 {\stdsize$\pm$ 0.08}
        \\
    No refinement & %
        0.181 {\stdsize$\pm$ 0.002} & 0.111 {\stdsize$\pm$ 0.002} & 1.08 {\stdsize$\pm$ 0.00} & 
        0.069 {\stdsize$\pm$ 0.003} & 0.024 {\stdsize$\pm$ 0.002} & 1.12 {\stdsize$\pm$ 0.02} & 
        0.0246 {\stdsize$\pm$ 0.0012} & 6.8e-4 {\stdsize$\pm$ 2.8e-4} & 2.05 {\stdsize$\pm$ 0.13} \\
    No $\nninterpolate$ dropout & %
        0.320 {\stdsize$\pm$ 0.009} & 0.206 {\stdsize$\pm$ 0.012} & 0.00 {\stdsize$\pm$ 0.00} &
        0.098 {\stdsize$\pm$ 0.005} & 0.028 {\stdsize$\pm$ 0.003} & 0.00 {\stdsize$\pm$ 0.00} &
        0.0337 {\stdsize$\pm$ 0.0038} & 2.5e-3 {\stdsize$\pm$ 6.4e-4} & 0.01 {\stdsize$\pm$ 0.00} \\
    {No $\nninterpolate$ dropout \& $\sigma_\epsilon$}  & 
        0.308 {\stdsize$\pm$ 0.009} & 0.197 {\stdsize$\pm$ 0.012} & 0.40 {\stdsize$\pm$ 0.01} &
        0.096 {\stdsize$\pm$ 0.005} & 0.028 {\stdsize$\pm$ 0.003} & 0.23 {\stdsize$\pm$ 0.00} &
        0.0282 {\stdsize$\pm$ 0.0031} & 2.7e-3 {\stdsize$\pm$ 6.5e-4} & 0.98 {\stdsize$\pm$ 0.05} \\
    Naive sampling &
        0.681 {\stdsize$\pm$ 0.062} & 0.945 {\stdsize$\pm$ 0.117} & 0.52 {\stdsize$\pm$ 0.03} &
        0.088 {\stdsize$\pm$ 0.004} & 0.029 {\stdsize$\pm$ 0.002} & 0.54 {\stdsize$\pm$ 0.01} &
        0.0115 {\stdsize$\pm$ 0.0035} & 4.5e-4 {\stdsize$\pm$ 2.6e-4} & 0.76 {\stdsize$\pm$ 0.06}
        \\
    \midrule
    $\fcondition=\texttt{null}$ & 
        0.182 {\stdsize$\pm$ 0.002} & 0.111 {\stdsize$\pm$ 0.002} & 0.94 {\stdsize$\pm$ 0.02} & 
        0.067 {\stdsize$\pm$ 0.003} & 0.022 {\stdsize$\pm$ 0.002} & 0.88 {\stdsize$\pm$ 0.01} &
        104 {\stdsize$\pm$ 111} & 9.9e+7 {\stdsize$\pm$ 1.4e+8} & 1.75 {\stdsize$\pm$ 0.16}
        \\
    $\fcondition=\xt$ & 
        0.196 {\stdsize$\pm$ 0.002} & 0.128 {\stdsize$\pm$ 0.002} & 1.13 {\stdsize$\pm$ 0.01} & 
        0.077 {\stdsize$\pm$ 0.006} & 0.028 {\stdsize$\pm$ 0.003} & 0.77 {\stdsize$\pm$ 0.02} &
        0.0103 {\stdsize$\pm$ 0.0022} & 4.2e-4 {\stdsize$\pm$ 2.1e-4} & 1.13 {\stdsize$\pm$ 0.08}
        \\
    $\fcondition=\alpha_n\xt + (1-\alpha_n)\epsilon$ & 
        0.181 {\stdsize$\pm$ 0.002} & 0.111 {\stdsize$\pm$ 0.002} & 1.04 {\stdsize$\pm$ 0.01} & 
        0.074 {\stdsize$\pm$ 0.003} & 0.026 {\stdsize$\pm$ 0.001} & 0.83 {\stdsize$\pm$ 0.02} &
        2.2044 {\stdsize$\pm$ 1.6889} & 3.6e+2 {\stdsize$\pm$ 3.5e+2} & 1.63 {\stdsize$\pm$ 0.04}
        \\
    \bottomrule
  \end{tabular}
  }
\end{table}

\subsubsection{Cold sampling versus naive sampling}
\label{sec:cold-vs-naive-ablation}
As mentioned in section~\ref{appendix:ode-cold-is-better}, we find clear evidence that the DDIM-like cold sampling (Alg.~\ref{alg:sa2}) proposed by~\cite{bansal2022cold} for ``generalized diffusion models'' is very important to attain good samples from \method. In Table~\ref{tab:sampling-ablation},
cold sampling corresponds to the first row, \emph{Base}, and the \emph{Naive sampling} row refers to swapping cold sampling (Alg.~\ref{alg:sa2}) with the naive sampling algorithm from~\cite{bansal2022cold} (see Alg.~\ref{alg:naive-sampling}). The finding that naive sampling yields poor results for \method\ and, more generally, ``generalized diffusion models'' is consistent with the findings from~\cite{bansal2022cold}.
  
\subsubsection{Inference dropout in the interpolator network}
\label{sec:inference-dropout}
In Table~\ref{tab:sampling-ablation} we show that disabling the inference dropout in the interpolator network, $\nninterpolate$, results in considerably worse scores.
This is to be expected, since without stochasticity in $\nninterpolate$, our current framework collapses to generating deterministic forecasts (since the sampling algorithm and forecaster network are deterministic, and we assume that the given initial conditions are fixed).
In such a case, computing the CRPS collapses to the mean absolute error, and the SSR becomes $0$ since there is no spread in the predictions. To attain an ensemble of forecasts, but keeping the interpolator dropout disabled, we also include an ablation row where we perturb the initial conditions with small random noise $\epsilon\sim\mathcal{N}(0, \sigma_\epsilon \mathbf{I})$, where we use $\sigma_\epsilon=0.05$.

\subsubsection{Refining the interpolation forecasts after sampling}
\label{sec:refinement-ablation}
We find in Table~\ref{tab:sampling-ablation} (\emph{No ref.} row), that the addition of line 6 to the cold sampling algorithm (see Alg.~\ref{alg:sa2}) can sometimes improve performance. However, this is not consistent across datasets: While for the SST dataset, we hardly observe any difference, the scores degrade considerably when not refining the forecasts for the spring mesh dataset. A reason could be the relatively long training horizon used for spring mesh (134 for spring mesh versus 16 or 7 for Navier-Stokes or SST). In practice, we recommend that practitioners train \method\ with the refinement step disabled to accelerate inference time (since the refinement step requires an additional forward pass per output timestep). Then, during evaluation it is encouraged to perform inference with \method\ with both disabled and enabled refinement step to analyze whether enabling the refinement step can meaningfully improve the forecasts.

\subsubsection{Forecaster network conditioning}
\label{sec:fcondition}
In the bottom three rows of Table~\ref{tab:sampling-ablation} we ablate the three forecaster conditioning options (see \ref{sec:methods-appendix} for detailed definitions), $\fcondition(\xt, n)$. The optimal choice of $\fcondition$ varies across datasets. In particular, for the spring mesh experiments, it turns out to be crucial to use the clean initial conditions (variant 2, where $\fcondition(\xt, \cdot)=\xt$) as conditioning (the two other variants diverge when forecasting the full test trajectories).
A likely explanation for this is that the training horizon of $h=134$ is relatively long for spring mesh. As a result, the forecaster network can greatly benefit from being explicitly conditioned on the initial conditions during all diffusion states (as opposed to only being indirectly influenced by $\xt$ through the interpolator network). In practice and if compute allows, it may be worthwhile to train \method\ with at least the first two variants for a few iterations to analyze if any conditioning variant strongly outperforms the others. 

\subsubsection{Choosing the training horizon}
\begin{wraptable}{r}{8cm}
\vspace{-5mm}
\caption{Navier-Stokes ablation of the training horizon, $h$. 
All methods are evaluated on the 64-step test trajectories. For example, for $h=1$ ($h=16$) the corresponding methods are unrolled autoregressively $64$ ($4$) times.}
\label{tab:navier-stokes-horizon-ablation}
\centering
  \resizebox{0.59\textwidth}{!}{
    \begin{tabular}{c|ccc}
    \cline{2-4}
    \toprule
    Method & CRPS & MSE & SSR\\
    \midrule
    Dropout $(h=1)$ & 0.132 {\stdsize$\pm$ 0.006} & 0.046 {\stdsize$\pm$ 0.006} & 0.002 {\stdsize$\pm$ 0.00} \\
    Dropout $(h=8)$ & 0.086 {\stdsize$\pm$ 0.012} & 0.026 {\stdsize$\pm$ 0.002} & 0.416 {\stdsize$\pm$ 0.29} \\
    Dropout $(h=16)$ & 0.078 {\stdsize$\pm$ 0.001} & 0.027 {\stdsize$\pm$ 0.001} & 0.715 {\stdsize$\pm$ 0.01} \\
    Dropout $(h=32)$ & 0.078 {\stdsize$\pm$ 0.001} & \secondbest{0.025 {\stdsize$\pm$ 0.001}} & 0.651 {\stdsize$\pm$ 0.01} \\
    \midrule
    \method\ $(h=8)$ & 0.076 {\stdsize$\pm$ 0.002} & 0.027 {\stdsize$\pm$ 0.001} & 0.701 {\stdsize$\pm$ 0.02} \\
    \method\  $(h=16)$ & \best{0.067 {\stdsize$\pm$ 0.003}} & \best{0.022 {\stdsize$\pm$ 0.002}} & \best{0.877 {\stdsize$\pm$ 0.01}} \\
    \method\  $(h=32)$ & \secondbest{0.075 {\stdsize$\pm$ 0.003}} & 0.028 {\stdsize$\pm$ 0.001} & \secondbest{0.862 {\stdsize$\pm$ 0.04}} \\
    \bottomrule
    \end{tabular}
}
\vspace{-2mm}
\end{wraptable}

In any multi-step forecasting model, the training horizon, $h$, is a key hyperparameter choice.
Usually, its choice is constrained by the number of timesteps that fit into GPU memory, and it is expected that larger training horizons will improve performance when evaluated on long (autoregressive) rollouts.
However, for continuous-time models including ours, where the number of timesteps needed in GPU memory does not change as a function of $h$ (see Table~\ref{tab:modeling-complexity}), the choice of $h$ is flexible.  
In Table~\ref{tab:navier-stokes-horizon-ablation}, we explore using three different training horizons ($h\in \{8, 16, 32\}$) for the Navier-Stokes dataset for both the barebone time-conditioned Dropout model as well as \method. For \method, this means that we train both an interpolator network as well as a corresponding forecaster net with the same training horizon. Note that $h=16$ corresponds to the main results, and is a sweet spot for \method\ for this task. However, any of the used horizons results in better scores than the best baseline (for any baseline training horizon). We also include a next-step forecasting baseline ($h=1$), which results in significantly worse results than any multi-step forecasting approach. This is likely due to the compounding effect of autoregressive errors.

\begin{table}
\centering
\caption{For the SST dataset, using $k$ extra artificial diffusion steps corresponding to floating-point $i_k$, especially such that $i_k < 1$ for all $k$ (similarly to Fig.~\ref{fig:example-extra-diffusion-steps-pre-t1}), improved performance significantly. The simplest possible schedule with $k=0$ corresponds to $S=[i_n]_{n=0}^{N-1}=[j]_{j=0}^{h-1}$, where the number of diffusion steps and the horizon are equal. The CRPS does not vary significantly when $k>0$, while the SSR increases steadily with larger $k$. 
This can be explained by larger $k$ creating a longer sampling sequence with $k$ extra diffusion steps that may lead to more diverse samples. We evaluate only on a subset of the test dataset (box 88 in Fig.~\ref{fig:oisstv2_boxes}).
}
\begin{tabular}{c|c|c|c}
\cline{2-4}
\toprule
$k$ & CRPS & MSE & SSR\\
\midrule
$k=0$ & 0.208 {\stdsize$\pm$ 0.002} & 0.107 {\stdsize$\pm$ 0.001} & 0.49 {\stdsize$\pm$ 0.00} \\
$k=10$ & 0.184 {\stdsize$\pm$ 0.002} & 0.113 {\stdsize$\pm$ 0.003} & 0.84 {\stdsize$\pm$ 0.02} \\
$k=25$ & 0.181 {\stdsize$\pm$ 0.001} & 0.111 {\stdsize$\pm$ 0.001} & 0.99 {\stdsize$\pm$ 0.02} \\
$k=40$ & 0.184 {\stdsize$\pm$ 0.002} & 0.113 {\stdsize$\pm$ 0.002} & 1.06 {\stdsize$\pm$ 0.03} \\
$k=45$ & 0.189 {\stdsize$\pm$ 0.004} & 0.119 {\stdsize$\pm$ 0.005} & 1.09 {\stdsize$\pm$ 0.03} \\
\bottomrule
\end{tabular}
\label{tab:ads-abl}
\end{table}

\subsubsection{Using auxiliary diffusion steps can improve performance}
\label{sec:ads}
In Table~\ref{tab:ads-abl} we ablate the number of artificial diffusion steps, $k$, used to train/evaluate \method\ on the SST dataset. As in Table~\ref{tab:sampling-ablation} we evaluate on the test box $88$ only. It is clear, that $k>0$ is important for optimal CRPS and SSR performance. Here, the $k$ auxiliary diffusion steps correspond to the floating-point interpolation timesteps 
$\{\frac{j}{k+1}\}_{j=1}^{k}$, as visualized for a toy example in Fig.~\ref{fig:example-extra-diffusion-steps-pre-t1}.
Interestingly, we did not observe this phenomenon for the Navier-Stokes or spring mesh datasets, where using $k>0$ did not improve performance over using the simplest schedule $S=[i_n]_{n=0}^{N-1}=[j]_{j=0}^{h-1}$, where the number of diffusion steps and the horizon are equal ($k=0$).

\subsubsection{Forecaster network iteratively refines its forecasts}
\label{sec:forecaster-refines-forecasts}
\begin{figure}
  \centering
  
  \begin{subfigure}[b]{\textwidth}
    \centering
    \includegraphics[width=\textwidth]{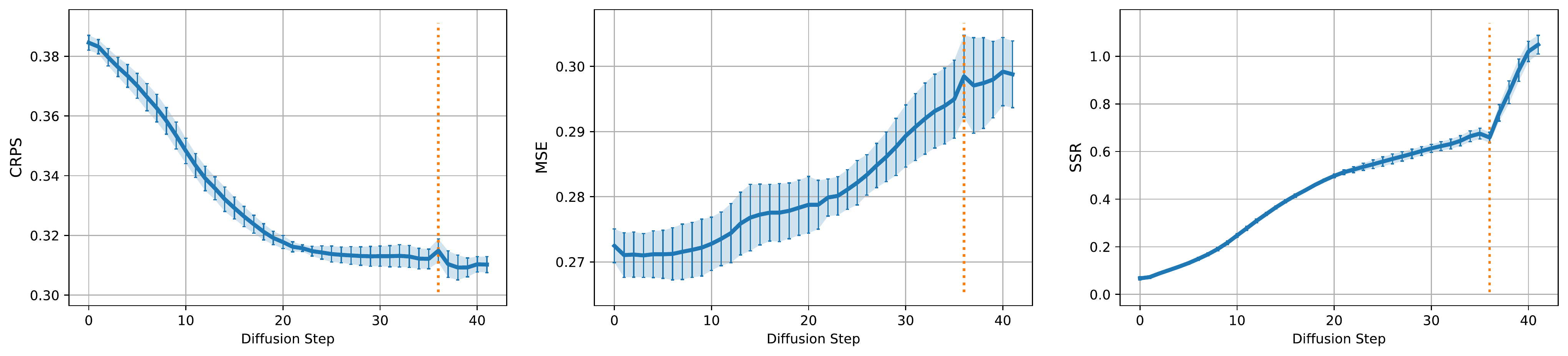}
    \caption{OISSTv2}
    \label{fig:xh-vs-diffusion-step-sst}
  \end{subfigure}
  
  \begin{subfigure}[b]{\textwidth}
    \centering
    \includegraphics[width=\textwidth]{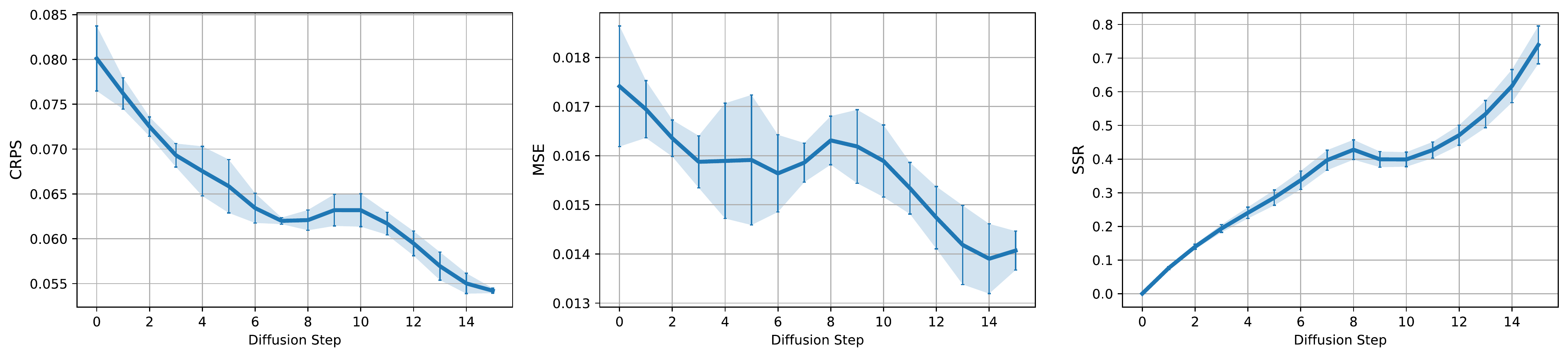}
    \caption{Navier-Stokes}
    \label{fig:xh-vs-diffusion-step-ns}
  \end{subfigure}
  
  \begin{subfigure}[b]{\textwidth}
    \centering
    \includegraphics[width=\textwidth]{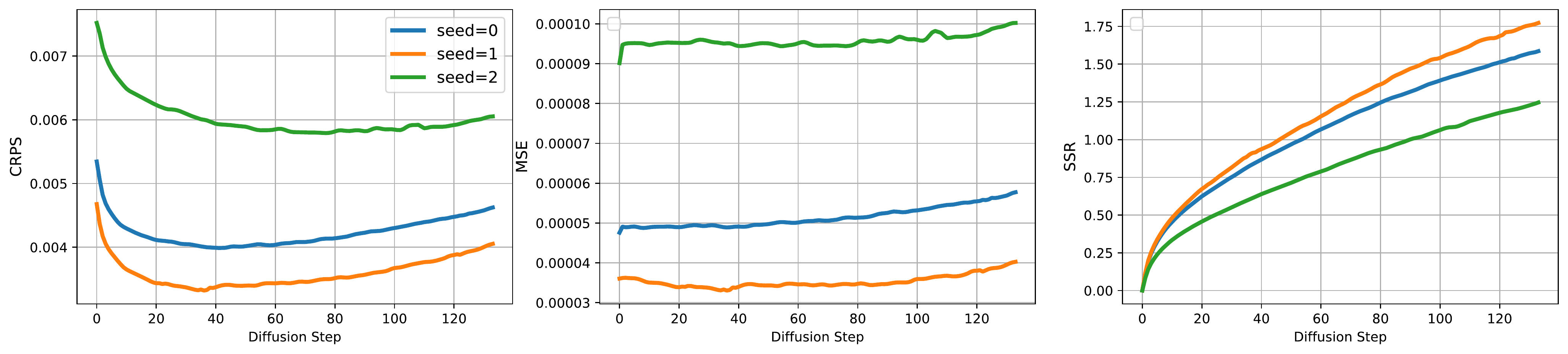}
    \caption{Spring Mesh}
    \label{fig:xh-vs-diffusion-step-sm}
  \end{subfigure}
  
  \caption{
  We verify that the forecasts $\xhatt[t+h] = \nn(\xhatt[t+i_n], \nntime[n])$ of the forecaster network (i.e. the diffusion model backbone)
  usually improve as the reverse process comes closer in time to $t+h$ (i.e. as $n$ and $i_n$ increase) in terms of CRPS and spread-skill ratio (SSR). Interestingly, the MSE scores tend to degrade for the SST and spring mesh datasets (\ref{fig:xh-vs-diffusion-step-sst} and \ref{fig:xh-vs-diffusion-step-sm}). 
  For the SST dataset (\ref{fig:xh-vs-diffusion-step-sst}), we also visualize a dotted vertical, orange line at diffusion step 36, which corresponds to the end of the 35 auxiliary diffusion steps (i.e. with floating-point $i_n<1$), which are followed by the diffusion steps corresponding to the data resolution (i.e. $i_n \in \{1, 2, \dots, 6\})$. The SSR score improves significantly after that diffusion step. 
  To achieve a clearer visualization of the spring mesh dataset, where the variance across random seeds is relatively large, we plot each run separately.
  Interestingly, for spring mesh, the CRPS starts degrading after around 30 to 70 diffusion steps, which may indicate that a shorter training horizon could be more amenable for performance. 
  }
  \label{fig:xh-vs-diffusion-step}
\end{figure}

In this set of experiments, we verify that the forecaster's prediction of $\xt[t+h]$ for a given diffusion step $n$ and associated input $\xhatt[t+i_n]$ usually improves as $n$ increases, i.e. as we approach the end of the reverse process and $\xt[t+i_n]$ gets closer in time to $\xt[t+h]$ (see Fig.~\ref{fig:xh-vs-diffusion-step}). This analysis is conducted on a subset of the SST test set (box $112$ in Fig.~\ref{fig:oisstv2_boxes}) and validation sets of Navier-Stokes and spring mesh, for the respective datasets, using a 20-member ensemble.
We use the validation sets of the physical systems benchmark because the scope of the analysis only extends up to the training horizon (as opposed to the full test set trajectories).

\subsubsection{Accelerated \method\ sampling}
In Fig.~\ref{fig:sampling-schedules-tradeoff} we study how sampling from \method\ can be accelerated in a similar way to how DDIM~\cite{song2021ddim} can accelerate sampling from Gaussian diffusion models. 
The continuous-time nature of the backbone networks in \method\ invites the use of arbitrary dynamical timesteps as diffusion states during inference. Thus, to accelerate sampling, we can skip some of the $N$ diffusion steps used for training, $S_{train}=\{i_n\}_{n=0}^{N-1}$, which automatically results in fewer neural network forward passes. %
For example, we can only use the base schedule $S_{base}=\{0, 1, \dots, h-1\}$, where the diffusion states correspond to the temporal resolution of the dynamical data in a one-to-one mapping (see black arrows in Fig.~\ref{fig:schedules}). In Fig.~\ref{fig:sampling-schedules-tradeoff}, we start with $S_{base}$ as inference schedule (left-most dots in each subplot) for the SST dataset, and then incrementally add more diffusion steps from $S_{train} \setminus S_{base}$ to it, until reaching the full training schedule (right-most dots).
We find that sampling can be significantly accelerated with marginal drops in CRPS and MSE performance.
Note that the dynamical timesteps needed as outputs of \method\ or for downstream applications pose a lower bound (here, $S_{base}$) on how much we can accelerate our method since any such output timestep needs to be included in the sampling schedule or in the set of output timesteps, $J$ (line 6 in Alg.~\ref{alg:sa2}). For that reason, it is not straightforward to accelerate the Navier-Stokes and spring mesh \method\ runs, since their training schedules already correspond to $S_{base}$.
\begin{figure}[t]
  \centering
    \includegraphics[width=\textwidth]{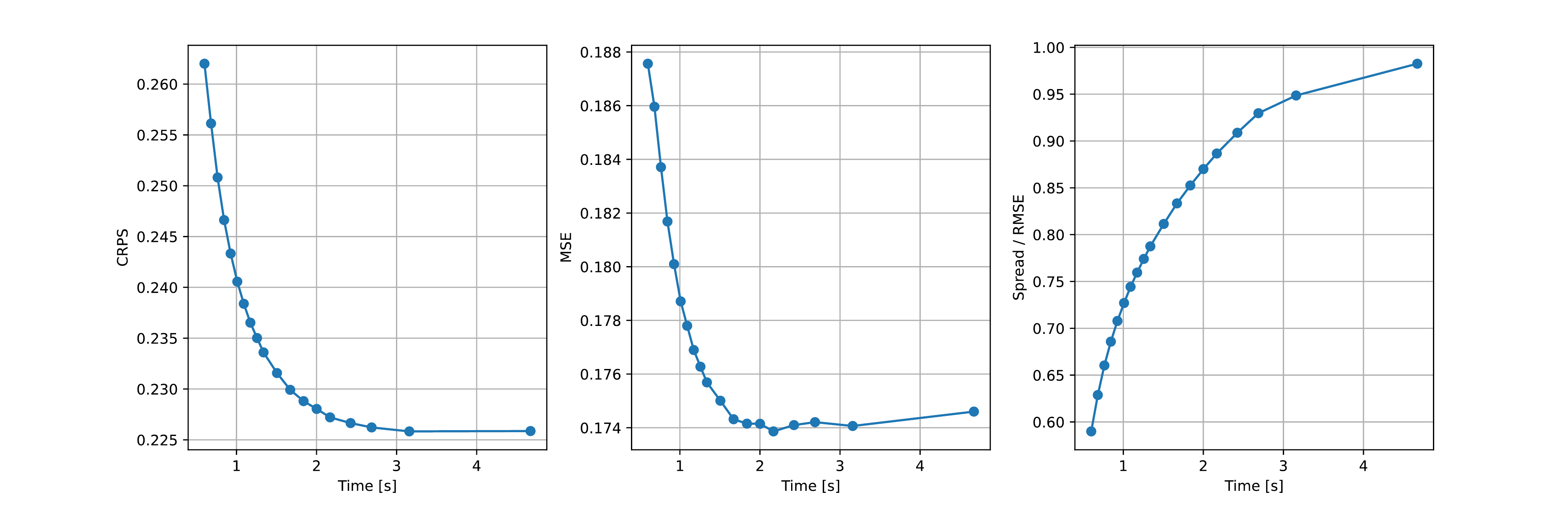}
  \caption{
    There is a clear trade-off between inference speed (x-axis) and performance (y-axis) as a function of the number of diffusion steps used for inference by \method. Here, we show the SST test scores for one run of \method, which was trained with $35$ auxiliary diffusion steps (on top of the $7$ given by the data). 
    Each dot from left to right represents performing inference with an increasing number of diffusion steps. Each dot uses the base schedule $S_{base}=\{0, 1, \dots, h-1\}$, where $h=7$, plus $N_{aux}$ additional diffusion steps drawn from the ones used for training. $N_{aux}=(1, 2, 3, 4, 5, 6, 7, 8, 9, 10, 12, 14, 16, 18, 20, 23, 26, 29, 35)$, and each such additional diffusion step corresponds to an implicit dynamical timestep in $(0, 1)$.
    Interestingly, almost equivalent (for CRPS) or slightly better (for MSE) scores can be sometimes obtained by using fewer diffusion steps than used for training (right-most dots of each subplot), which immediately benefits inference speed.
  }
  \label{fig:sampling-schedules-tradeoff}
\end{figure}

\subsection{Modeling Complexity of \method\ and Baselines}
\label{sec:modelingcomplexity}
In Table~\ref{tab:modeling-complexity} we enumerate the different modeling and computing requirements needed for each of the baselines and our method.
Dropout (multi-step) refers to the barebone backbone network that forecasts all $h$ timesteps $\xt[t+1:t+h]$ in a single forward pass.
Dropout (continuous) refers to the barebone backbone network that forecasts one timestep $\xt[t+k]$ for $k\in (1, h)$ in a single forward pass, conditioned on the time, $k$. Both methods perform similarly in our exploratory experiments (not shown), and in our experiments, we always report the scores of the time-conditioned (i.e. continuous) variant. The multi-step approach corresponds to the way the backbone model of a video diffusion model operates, while the continuous variant is similar to how the forecaster network in \method\ operates.
Video diffusion models have higher modeling complexity because they need to model the full ``videos'', $\xt[t+1:t+h]$, at each diffusion state (or corrupted versions of it). Especially for long horizons, $h$, and high-dimensional data with several channels, this complicates the learning task for the neural network. Meanwhile, our method is only slightly impacted by the choice of $h$.

\begin{table}[t!]
\vspace{-4mm}
\caption{We report the requirements needed to train a method to forecast up to $h$ steps into the future, where $c$ refers to the number of input/output channels (e.g. 1 for SST data), and $w$ to the window size (here, $w=1$ for all experiments).
In the second and third columns, we report the input and output channel dimensions, respectively, that the (backbone) neural network needs to have (assuming that the window dimension is concatenated to the channel dimension).
$|$Mem$(\xt)|$ refers to the number of timesteps that need to be present in (GPU) memory to compute the training objective. The last column denotes how many network forward passes are needed to get the forecasts for all $h$ timesteps. Here, $N_1, N_2$ refers to the number of (sampling) diffusion steps used by DDPM/MCVD and \method, respectively. Usually, $N_1>N_2\ge h$, since Gaussian noise diffusion models will require more diffusion steps to attain comparable predictive skill. For Navier-Stokes and spring mesh $N_2=h$, for SST $N_2=35$. For MCVD, $N_1=1000$.
The factor of 3 for \method\ is a result of the two extra interpolator network forward passes needed in line 4 of Alg.~\ref{alg:sa2}. For large horizons, the model size and memory requirements of multi-step models and conventional diffusion models can be prohibitive. It is clear that (video) diffusion models do not scale well for long horizons.
}
\label{tab:modeling-complexity}
\centering
\begin{tabular}{lccccc}
\multicolumn{5}{c}{\textbf{Modeling complexity}} \\
\toprule
Method            & $c_{in}$  & $c_{out}$ & $|$Mem$(\xt)|$ & \#Forward \\
\midrule
Dropout (continuous)  & $w*c$     & $c$   & $w+1$ & $h$ \\
Dropout (multi-step) & $w*c$     & $h*c$ & $w+h$ & $1$ \\
DDPM / MCVD       & $(h+w)*c$ & $h*c$ & $w+h$ & $N_1$ \\
\method           & $w*c$     & $c$   & $w+1$ & $3*N_2$ \\
\bottomrule
\end{tabular}
\end{table}

\subsection{Example Sampling Trajectories Visualized}
\label{sec:samplingschedulesappendix}
Example sampling trajectories of our approach, as a function of the schedule $i_n$, are visualized in Fig.~\ref{fig:schedules_long}, where the top one corresponds to the simplest case where we use a one-to-one mapping between diffusion steps, $n$, and interpolation/dynamical timesteps, $i_n$.
Each of the intermediate $\xhatt[i]$ can be used as a forecast for timestep $i$. The forecaster network, $\nn$, repeatedly forecasts $\xt[h]$, but does so with increasing levels of skill (analogously to how conventional diffusion models iteratively denoise/refine the predictions of the ``clean'' data, $\xk[0]$; see Fig.~\ref{fig:xh-vs-diffusion-step}).

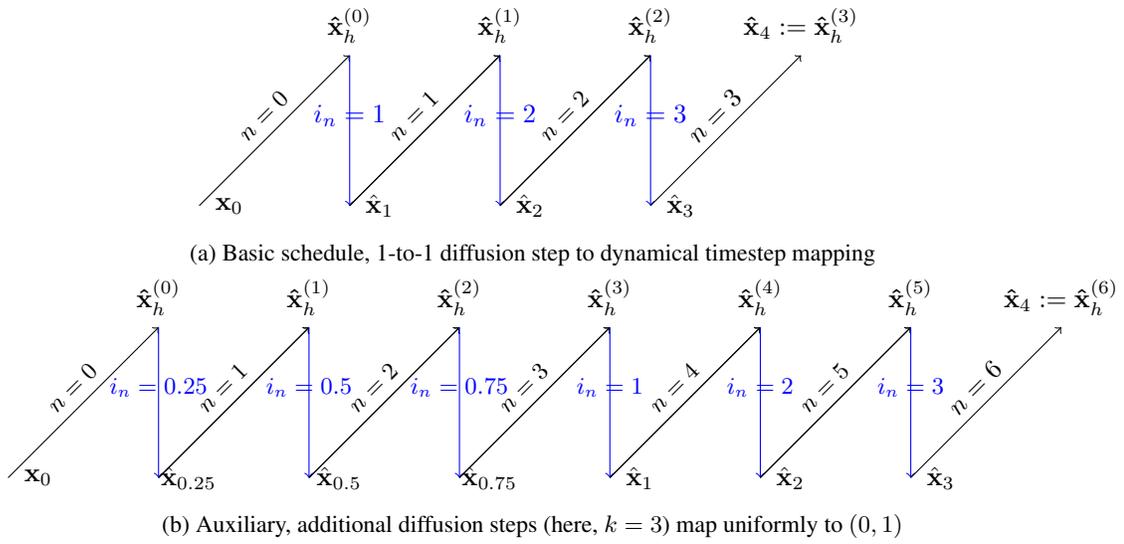
\begin{figure}[h]
\begin{subfigure}{\textwidth}
  \centering
    \begin{tikzpicture}
\node at (2.4,7) {$\xt[0]$};
\foreach [count=\i from 0] \j/\ti in {2/1, 4/2, 6/3} {
  \draw[->] (\j, 7) -- (\j+2, 9);
  \draw[->, blue] (\j+2, 9) -- (\j+2, 7);
  \draw[->] (\j+2, 7) -- (\j+4, 9);
  
  \node[rotate=45] at (\j+0.85, 8.2) {\small{$n=\i$}};
  \node at (\j+2, 8.2) {\color{blue}{$i_n=\ti$}};

  \node at (\j+2.4, 7) {$\hat{\mathbf{x}}_{\ti}$};
  \node at (\j+2, 9.4) {$\xhatt[h]^{(\i)}$};
}

 \node[rotate=45] at (6+2.85, 8.2) {$n=3$};
\node at (6+4, 9.4) {$\xhatt[4]:=\xhatt[h]^{(3)}$};
\end{tikzpicture}
    \caption{Basic schedule, 1-to-1 diffusion step to dynamical timestep mapping}
  \label{fig:sub-first}
\end{subfigure}
\begin{subfigure}{\textwidth}
  \centering
\begin{tikzpicture}
\node at (2.4,7) {$\xt[0]$};
\foreach [count=\i from 0] \j/\ti in {2/0.25, 4/0.5, 6/0.75, 8/1, 10/2, 12/3} {
  \draw[->] (\j, 7) -- (\j+2, 9);
  \draw[->, blue] (\j+2, 9) -- (\j+2, 7);
  \draw[->] (\j+2, 7) -- (\j+4, 9);
  
  \node[rotate=45] at (\j+0.85, 8.2) {\small{$n=\i$}};
  \node at (\j+2, 8.2) {\small{\color{blue}{$i_n=\ti$}}};
  
  \node at (\j+2.4, 7) {$\hat{\mathbf{x}}_{\ti}$};
  \node at (\j+2, 9.4) {$\xhatt[h]^{(\i)}$};
}
 \node[rotate=45] at (12+2.85, 8.2) {$n=6$};
\node at (12+4, 9.4) {$\xhatt[4]:=\xhatt[h]^{(6)}$};
\end{tikzpicture}
\caption{Auxiliary, additional diffusion steps (here, $k=3$) map uniformly to $(0, 1)$}
    \label{fig:example-extra-diffusion-steps-pre-t1}
\end{subfigure}
\caption{Exemplary sampling trajectories of two different schedules for mapping diffusion steps, $n$, to interpolation timesteps, $i_n$.
For convenience, Fig.~\ref{fig:sub-first} is a copy of Fig.~\ref{fig:dyffusion-interplay}. The schedules are illustrated using a horizon of $h=4$. The \textbf{{\color{black}black}} lines represent forecasts performed by the forecaster network, $\nn$. The first forecast is performed based on the initial conditions, $\xt[0]$. The \textbf{{\color{blue}blue}} lines represent the subsequent temporal interpolations performed by the interpolator network, $\nninterpolate$.
}
\label{fig:schedules_long}
\end{figure}

\end{document}